\documentclass{article}

\usepackage{arxiv}
\usepackage{fullpage}
\usepackage[utf8]{inputenc}
\usepackage[T1]{fontenc}
\usepackage{hyperref}
\usepackage{url}
\usepackage{amsfonts}
\usepackage{clrscode}
\usepackage{physics}
\usepackage{graphicx}
\usepackage{caption}
\usepackage{float}
\usepackage{subcaption}
\usepackage{siunitx}
\usepackage{mathtools}
\usepackage{comment}
\usepackage[title]{appendix}

\usepackage{amsmath,amsfonts,amssymb,amsthm}
\usepackage{nicefrac}
\usepackage{booktabs}
\theoremstyle{plain}
\newtheorem{thm}{Theorem}[section]
\newtheorem{lem}[thm]{Lemma}
\newtheorem{prop}[thm]{Proposition}

\theoremstyle{definition}
\newtheorem{defn}{Definition}[section]

\newtheorem{assum}{Assumption}[section]

\theoremstyle{remark}

\newtheorem{theorem}{Theorem}
\newtheorem*{theorem*}{Theorem}
\newtheorem*{lem*}{Lemma}
\newtheorem*{thm*}{Theorem}
\newtheorem*{defn*}{Definition}

\DeclareMathOperator*{\argmin}{arg\,min}

\hypersetup{
    colorlinks=true,
    linkcolor=blue,
    filecolor=magenta,      
    urlcolor=cyan,
}

\title{Estimation and Applications of Quantiles in Deep Binary Classification}

\author{
  Anuj Tambwekar \\
  Department of CS\&E \\
  PES University\\
  Bengaluru, Karnataka, India \\
  \texttt{anujstam@gmail.com} \\
   \And
  Anirudh Maiya \\
  Department of CS\&E \\
  PES University\\
  Bengaluru, Karnataka, India \\
  \texttt{maiyaanirudh@gmail.com} \\
   \And
    Soma Dhavala \\
  Founder\\
  MLSquare\\
  Bengaluru, India \\
  \texttt{soma@mlsquare.org} \\
    \And
  Snehanshu Saha \\
  Department of CSIS and APPCAIR \\
  Birla Institute of Technology and Science\\
  Goa, India \\
  \texttt{snehanshus@goa.bits-pilani.ac.in} \\

}

\begin{document}
\date{}
\maketitle

\begin{abstract}
Quantile regression, based on check loss, is a widely used inferential paradigm in Econometrics and Statistics. The conditional quantiles provide a robust alternative to classical conditional means, and also allow uncertainty quantification of the predictions, while making very few distributional assumptions. We consider the analogue of check loss in the binary classification setting. We assume that the conditional quantiles are smooth functions that can be learnt by Deep Neural Networks (DNNs). Subsequently, we compute the Lipschitz constant of the proposed loss, and also show that its curvature is bounded, under some regularity conditions. Consequently, recent results on the error rates and DNN architecture complexity become directly applicable.

We quantify the uncertainty of the class probabilities in terms of prediction intervals, and develop individualized confidence scores that can be used to decide whether a prediction is reliable or not at scoring time. By aggregating the confidence scores at the dataset level, we provide two additional metrics, model confidence, and retention rate, to complement the widely used classifier summaries. We also the robustness of the proposed non-parametric binary quantile classification framework are also studied, and we demonstrate how to obtain several univariate summary statistics of the conditional distributions, in particular conditional means, using smoothed conditional quantiles, allowing the use of explanation techniques like Shapley to explain the mean predictions. Finally, we demonstrate an efficient training regime for this loss based on Stochastic Gradient Descent with Lipschitz Adaptive Learning Rates (LALR). 
\end{abstract}

\section{Introduction}
Deep Learning has seen tremendous success over the last few years in the fields of Computer Vision, Speech and Natural Language processing \cite{hinton}. As it is making its way into numerous real world applications, focus is shifting from achieving state-of-the-art performance to questions about explainability, robustness, trustworthiness, fairness and training efficiency, among others. Uncertainty Quantification (UQ) is also witnessing a renewed interest within the Deep Learning community. All of the aforementioned aspects need to be tackled in a holistic manner to democratize AI and make AI equitable for all sections of the society at large \cite{Ahmed2020AFF}. 
In a seminal paper, Parzen provides a foundation for exploratory and confirmatory data analysis using quantiles \cite{parzen_79}, and later argues for unification of the theory and practice of statistical methods with them \cite{parzen_unified}. In this work, we take these ideas forward and show how some of the problems mentioned before in the Deep Learning context, can be solved using a quantile-centric approach.

Quantile Regression (QR) generalizes the traditional mean regression to model the relationship between the quantiles of the response to the dependent variables, median regression being the special case \cite{koenker_78}. QR inherits many desirable properties of quantiles: they are robust to noise in the response variable, have a clear probabilistic interpretation, and are equivariant under monotonic transformations. Besides, they also have appealing asymptotic properties under mild assumptions both in the parametric and the non-parametric settings \cite{Portnoy_89,probal_doksum}. QR has found many successful applications in Econometrics and Statsitics, such as modeling growth curves, extreme events, and in the robust regression contexts  \cite{koenker_2005, trivedi_2010, Maronna2006RobustST, probal_l1}. Its introduction to the Machine Learning community is relatively recent, where \cite{smola} showed the relationship between $\nu-$Support Vector Machines and QR, for example. \cite{natasa} applied QR for modeling \textit{aleatoric} uncertainty in deep learning via prediction intervals (PIs). Unlike previous works, we study QR in the binary classification setting since: 1) Much of the earlier work on QR can be extended to the deep learning context, with very minimal effort, which is not the case with classification tasks 2) Despite the dominance of classification tasks in the DL space, reliance on the popular but problematic binary cross entropy is still prevalent, and there is a need to find viable alternatives 3) We also want to study several problems together, as mentioned before, with binary classification as a test bed. We hope that, our findings can be extended to multi-class settings in future.

In the rest of this work, first we setup the problem, along with notations, and derive the Binary Quantile Regression (BQR) loss. We derive some properties of the loss function and provide the learning rates under the regularity assumptions. Later, for each of the sub-problems, namely, UQ, Explainability, Robustness, and Adaptive Learning Rates, we provide the necessary background, develop the idea, and provide the results. 
Finally, we discuss our findings, scope for improvements and new opportunities.

\section{Binary Quantile Regression}

\subsection{Setup and Notations}
\begin{defn}
For any real valued random variable $Z$, with distribution function $F(z)$, with $F(z) = P(Z \le z) $, the quantile function $Q(\tau)$ is given as
$Q(\tau) = F^{-1}(\tau) = \inf\{r: F(r) \ge \tau\} $ for any $0 < \tau < 1 $.
\end{defn}
\begin{assum}
We collect n i.i.d samples $\{x_i,y_i\}^{n}_{i=1}$, where $x \in [-1,1]^d$, and is continuously distributed, represents the d-dimensional input features and $y \in \{0,1\}$ the class label. For an absolute constant $M > 0$, assume $\|f^*\|_{\infty} \le M$ 
\end{assum}
\begin{assum}
Assume $f^*$ lies in the Sobolev ball $W^{\beta,\infty}([-1, 1]^d)$, with smoothness $\beta \in  N_{+}$
\begin{eqnarray*}
    f^{*}(x) &\in& W^{\beta,\infty}([-1, 1]^d): = \left\{ f:  \max_{\alpha, |\alpha|\le\beta}  \sup{x \in [-1,1]^{d}} |D^{\alpha}|\le 1 \right\},
\end{eqnarray*}
where $\alpha =(\alpha_1,\alpha_2,\hdots, \alpha_d), |\alpha|=\alpha_1+\alpha_2,\hdots+\alpha_d$ and $D^{\alpha}f$ is the weak derivative.
\end{assum}
\begin{assum}
Let $f^*$ lie in a class F. For the feedforward network class $F_{DNN}$, let the approximation error $\epsilon_{}$ be
$$
\epsilon_{f^*} :=\sup_{f^* \in F} \inf_{\substack{ f \in F_{DNN} \\ \|f\|_{\infty}\le 2M}} {\|f - f^*\|}_{\infty}
$$
\end{assum}
Given the input features $x$, we aim to learn a classifier that maps the inputs to the class labels. Let $Q_x{(\tau)}=f_{\tau}(x), \tau \in (0,1)$ be a continuous, smooth, conditional (on x) quantile learnt by the DNN.  We consider an architecture of the form
 $Q_x{(\tau)} = g_{\tau}(g_{c}(x))$ where $g_{\tau}$ is quantile-specific network and $g_{c}$ is a layer shared by all quantiles. \cite{zhu_cqr} showed that, sharing parameters across quantiles generally leads to better statistical efficiency. It is akin to multi-task learning, where each quantile estimation is a task, and our architecture is inspired by this observation.
 \subsection{Background}
\cite{mansky_75} considered the median regression for thresholded binary response models of the form
$Z = x\beta + U, Y = I(Z\ge 0)$, where $Z$ is the latent response, $\beta$ is a $d \times 1$ vector of unknowns, $U \sim F(.)$ are i.i.d errors from a continuous distribution and $I(.)$ is an indicator function. Later, in \cite{mansky_85}, he proved the consistency and asymptotic properties of the Maximum Score Estimator and also noted that it can be extended to model other quantiles as a solution to the to the optimization problem:
$\argmin_{\beta |\beta|=1} \sum_{i=1}^{n} \rho_{\tau}(y_i-I(x\beta\ge0))$. Here, $\rho_{\tau}$ is the check loss or pinball loss, defined as $\rho_{\tau}(e) = (\tau - I(e<0))e$ \cite{koenker_78}. It is well known that check loss is a generalized version of the Mean Absolute Error (MAE), often used in Robust Regression settings, and that quantiles minimize the check loss. \cite{horowitz_92, kordas_2006} provided efficient estimators by replacing the Indicator function with smooth kernels. \cite{Benoit_12} considered the Bayesian counterpart by noting that check loss is the kernel of Asymmetric Laplace Density (ALD). Below, we extend this to the non-parametric settings and derive the loss function suitable for DNNs.
\subsection{BQR Loss}
Let us reconsider thresholded binary response model $$y=I(z\ge0), \, z=Q_x(\tau)=f_{\tau}(x) + \epsilon$$ where $\epsilon \sim ALD(0,1,\tau)$ and $$ALD(y;\mu,\sigma,\tau) \equiv \tau(1-\tau){\sigma}^{-1} \exp(-\rho_{\tau}((y-\mu){\sigma}^{-1}))$$
It can be shown that, 
\begin{eqnarray*}
    P(y=1 | f_{\tau}(x)) &\equiv&
    \begin{cases} 
      1 - \tau \exp((\tau-1)f_{\tau}(x)) & 0 < f_{\tau}(x) \\
      (1-\tau)\exp(\tau f_{\tau}(x)) & 0 \geq f_{\tau}(x) \\
   \end{cases}
\end{eqnarray*}
The empirical loss, under the settings defined earlier, can now be defined as the negative of the log-likeihood function, given as:
\begin{eqnarray*}
        L_{BQR}(y;f_{\tau(x)}) = y_i \log((P(y=1|f_{\tau}(x_i))^{-1})) + \\ {(1-y_i)}\log( (1 - P(y=1|f_{\tau}(x_i)))^{-1})
\end{eqnarray*}
It is to be noted that, we can recover logistic and probit models, when the error distributions are logistic and normal distributions, respectively. Next, we analyze the learnability of latent functions. Before we do that, we provide two lemmas. (See Appendix \ref{appendix} for Proofs)
\begin{lem}
The Lipschitz constant of the BQR loss is $\max(\tau,1-\tau)$
\end{lem}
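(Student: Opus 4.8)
The plan is to regard $L_{BQR}$ as a function of the single real variable $u := f_{\tau}(x)$, separately for each label $y\in\{0,1\}$, and to compute $\sup_u \lvert \partial L_{BQR}/\partial u\rvert$. Since on each of the regions $\{u>0\}$ and $\{u\le 0\}$ the loss is $C^1$ and the two pieces agree at $u=0$, the Lipschitz constant with respect to the network output is exactly the supremum of $\lvert\partial_u L_{BQR}\rvert$ over the four branches, so it suffices to evaluate that supremum and exhibit where it is attained.

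First I would substitute the closed form of $P(y=1\mid u)$ into the loss. On two of the branches the loss collapses to an affine function of $u$: for $y=1,\ u\le 0$ it is $-\log(1-\tau)-\tau u$, and for $y=0,\ u>0$ it is $-\log\tau+(1-\tau)u$; hence $\lvert\partial_u L_{BQR}\rvert$ equals the constant $\tau$ on the first and $1-\tau$ on the second. The remaining two branches are $-\log\!\big(1-\tau e^{(\tau-1)u}\big)$ for $y=1,\ u>0$ and $-\log\!\big(1-(1-\tau)e^{\tau u}\big)$ for $y=0,\ u\le 0$; here I would first note that the log-arguments stay strictly positive on their domains, so the expressions are smooth, then differentiate and reduce to a one-parameter form via the substitution $v=e^{(\tau-1)u}\in(0,1)$ (respectively $w=e^{\tau u}\in(0,1]$), obtaining $\partial_u L_{BQR}=\tau(\tau-1)v/(1-\tau v)$ and $\partial_u L_{BQR}=\tau(1-\tau)w/\big(1-(1-\tau)w\big)$.

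The key observation is that each of these is monotone in its parameter, because the $v$-derivative of the first is $\tau(\tau-1)/(1-\tau v)^2$, of constant sign (and similarly for the second in $w$). Evaluating at the endpoints of the parameter ranges then shows $\lvert\partial_u L_{BQR}\rvert$ takes values in $(0,\tau)$ on the $y=1,\ u>0$ branch and in $(0,1-\tau]$ on the $y=0,\ u\le 0$ branch, with $\tau$ and $1-\tau$ respectively approached as $u\to 0$. Taking the maximum over all four branches gives $\max(\tau,1-\tau)$, and since this value is already attained on an affine branch (on all of $\{y=1,u\le 0\}$ or on all of $\{y=0,u>0\}$, whichever corresponds to the larger of $\tau$ and $1-\tau$), the constant is sharp.

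I do not expect a genuine obstacle. The only thing that could conceivably go wrong is one of the two nonlinear branches ``overshooting'' the bound set by the affine branches, and the monotonicity-in-$v$ (respectively $w$) computation rules this out directly. The remaining points — continuity of $L_{BQR}$ across $u=0$ (immediate from the agreement of the one-sided limits of $P(y=1\mid u)$) and positivity of the log-arguments — are bookkeeping rather than difficulty. So the proof is essentially a bounded-derivative estimate organized by the four cases indexed by $y$ and the sign of $u$.
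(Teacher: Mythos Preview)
Your proposal is correct, and it is organized around the same case split on $y$ and the sign of the network output as the paper, but the mechanism is genuinely different. The paper works directly with difference quotients $|L(y,z_2)-L(y,z_1)|/|z_2-z_1|$ and therefore needs six cases (it must also handle the cross-region situation $z_1<0<z_2$ for each label); in the nonlinear cases it locates the supremum by asserting that the ratio is maximized as $z_1,z_2\to 0$ and then computing iterated limits. You instead invoke the standard fact that a continuous, piecewise $C^1$ function has Lipschitz constant $\sup|f'|$, which collapses the analysis to four derivative computations and eliminates the cross-region cases entirely. Your monotonicity argument in the auxiliary variables $v,w$ also makes rigorous precisely the step the paper leaves implicit, namely \emph{why} the extremum sits at $u=0$; and your remark that the bound is actually attained on the affine branches supplies sharpness, which the paper's proof does not state explicitly. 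The trade-off is that your argument leans on the continuity check at $u=0$ to justify patching the pieces together, whereas the paper's difference-quotient approach is self-contained in each case.
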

The implications of this important result are manifested in section 6. We set the learning rate accordingly and accomplish significantly faster convergence in binary classification tasks. It may also be useful in studying the robustness of BQR against adversarial attacks \cite{kevin}.
\begin{lem}
BQR also admits a bound in terms of the curvature of the function $f^*$. That is
$$
    c_1 E((f-f^*)^2) \le E(L(y,f)-L(y,f^*)) \le   c_2 E((f-f^*)^2)
$$
where $c_1$ and $c_2$ constants, bounded away from 0.
\end{lem}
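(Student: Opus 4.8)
The strategy is to identify the excess population risk at a point with a Bernoulli Kullback--Leibler divergence, and then sandwich that divergence between two multiples of the squared gap of the underlying success probabilities, which is in turn comparable to $(f(x)-f^*(x))^2$ because the ALD link has a derivative that is bounded above and away from zero on a bounded interval. Write $p(t):=P(y=1\mid f_\tau(x)=t)$ for the link function displayed above; it is continuous and strictly increasing on $\mathbb{R}$, takes values in $(0,1)$, and has derivative $p'(t)=\tau(1-\tau)e^{\tau t}$ for $t\le 0$ and $p'(t)=\tau(1-\tau)e^{(\tau-1)t}$ for $t>0$, which is continuous and strictly positive everywhere. Under the thresholded-ALD generative model of this section, $q(x):=P(y=1\mid x)=p(f^*(x))$, so $f^*$ minimizes the population risk. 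Conditioning on $x$ and integrating over $y$, the conditional risk of a predictor $f$ equals the cross-entropy $H(\mathrm{Ber}(q(x)),\mathrm{Ber}(p(f(x))))$, which is minimized precisely when $p(f(x))=q(x)$, i.e. at $f(x)=f^*(x)$; subtracting gives $E[L(y,f)-L(y,f^*)\mid x]=\mathrm{KL}(\mathrm{Ber}(q(x))\,\|\,\mathrm{Ber}(p(f(x))))$, and therefore $E[L(y,f)-L(y,f^*)]=E_x\,\mathrm{KL}(\mathrm{Ber}(q(x))\,\|\,\mathrm{Ber}(p(f(x))))$.

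Next I would sandwich the Bernoulli KL divergence. For the lower bound, Pinsker's inequality combined with $\mathrm{TV}(\mathrm{Ber}(a),\mathrm{Ber}(b))=|a-b|$ gives $\mathrm{KL}(\mathrm{Ber}(a)\,\|\,\mathrm{Ber}(b))\ge 2(a-b)^2$. For the upper bound, $\mathrm{KL}\le\chi^2$ and the closed form $\chi^2(\mathrm{Ber}(a)\,\|\,\mathrm{Ber}(b))=(a-b)^2/(b(1-b))$ give $\mathrm{KL}(\mathrm{Ber}(a)\,\|\,\mathrm{Ber}(b))\le (a-b)^2/(b(1-b))$. Applying these with $a=q(x)=p(f^*(x))$ and $b=p(f(x))$, and writing $p(f^*(x))-p(f(x))=p'(\xi(x))(f^*(x)-f(x))$ by the mean value theorem for some $\xi(x)$ between $f(x)$ and $f^*(x)$, reduces the problem to controlling $p'(\xi(x))$, $p(f(x))$, and $1-p(f(x))$ from above and below.

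This is where the a priori bounds enter: $\|f^*\|_\infty\le M$ together with the constraint $\|f\|_\infty\le 2M$ imposed on the network class forces $f(x),f^*(x)\in[-2M,2M]$, hence also $\xi(x)\in[-2M,2M]$. On this compact interval the continuous strictly positive function $p'$ has a minimum $\underline m>0$ and a maximum $\overline m<\infty$, and the continuous strictly positive function $t\mapsto p(t)(1-p(t))$ has a minimum $\underline v>0$, where $\underline m,\overline m,\underline v$ depend only on $\tau$ and $M$. Combining with the previous step gives, pointwise in $x$, $2\underline m^2(f(x)-f^*(x))^2\le\mathrm{KL}(\mathrm{Ber}(q(x))\,\|\,\mathrm{Ber}(p(f(x))))\le(\overline m^2/\underline v)(f(x)-f^*(x))^2$; taking $E_x$ and invoking the identity from the first paragraph proves the claim with $c_1=2\underline m^2>0$ and $c_2=\overline m^2/\underline v<\infty$.

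\textbf{Anticipated obstacles.} The two non-routine points are: (i) the identity ``excess risk $=$ Bernoulli KL'', which rests on $f^*$ being exactly the population minimizer, i.e. on the ALD thresholded model being well specified, so that the cross-entropy at $f^*$ collapses to the entropy and the first-order term disappears; and (ii) obtaining constants $\underline m,\overline m,\underline v$ that are \emph{uniform} over the relevant inputs, which is precisely where $\|f^*\|_\infty\le M$ and $\|f\|_\infty\le 2M$ are indispensable, since otherwise $p'(f(x))\to 0$ and $p(f(x))(1-p(f(x)))\to 0$ as $|f(x)|\to\infty$ would break the lower and upper bounds respectively. A second-order Taylor alternative, expanding the conditional risk around $f^*(x)$ so that its curvature is the Fisher information $p'(f^*(x))^2/(q(x)(1-q(x)))$ of the link, would reach the same conclusion, but it must additionally handle the kink of the ALD at $t=0$, namely that the link $p$ is $C^1$ but not $C^2$ there, for instance via the integral form of Taylor's remainder; the KL route above avoids this.
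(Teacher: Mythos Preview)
Your proposal is correct and takes a genuinely different route from the paper. The paper proceeds by a direct second-order Taylor expansion of $h_a(b)=L(b,y)-L(a,y)$ around $a$, then computes $h_a''(b)$ explicitly in four cases according to the signs of $a$ and $b$; in each case the second derivative is shown to lie between a strictly positive quantity depending on $\tau$ and $M$ and the ceiling $\tau(1-\tau)$, yielding the sharp upper constant $c_2=\tfrac12\tau(1-\tau)$ and $c_1=\tfrac12\min(A_1,\dots,A_4)$ with the $A_i$ explicit. Your argument instead rewrites the conditional excess risk as a Bernoulli KL divergence, sandwiches it with Pinsker below and $\chi^2$ above, and then converts the probability gap to $(f-f^*)$ via the mean value theorem on the $C^1$ link $p$, using $\|f^*\|_\infty\le M$ and $\|f\|_\infty\le 2M$ to bound $p'$ and $p(1-p)$ on $[-2M,2M]$.

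The trade-offs: the paper's four-case computation delivers concrete constants (in particular the clean $c_2=\tfrac12\tau(1-\tau)$), but its Taylor step is written with the second derivative evaluated at the base point rather than an intermediate one and leaves the vanishing of the first-order term implicit; it also has to work across the kink of $p$ at $0$ case by case. Your KL route is more conceptual, avoids the piecewise second-derivative bookkeeping, needs only $C^1$ regularity of the link (so the non-$C^2$ point at $0$ never arises), and makes transparent exactly where well-specification and the $\|f\|_\infty$ constraint enter; the price is that your upper constant $\overline m^{\,2}/\underline v$ is less sharp than the paper's, since $\overline m=\tau(1-\tau)$ but $\underline v$ drags in an $M$-dependent factor.
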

Due to Lemmas 2.1 and 2.2, the BQR loss satisfies Eqn(2.1) of \cite{farrell}. Consequently, all the results of their paper are directly applicable, under suitable conditions. In particular, we restate their major result, Theorem 2:
\begin{thm}
Suppose Assumptions 2.1-2.3 hold. Let $f$ be the deep ReLU network with $W$ number of parameters. Under BQR, with probability at least $1-e^{-\gamma}$, for large enough $n$, for some $C>0$, 
$$ \|f-f^*\|^2_{L_2(x)} =  E((f-f^*)^2) \le B $$
for $ B = C\left( \frac{W\log(W)}{n}\log n+ \frac{\log\log n + r}{n} + \epsilon_{f^*}^2 \right)  $
\end{thm}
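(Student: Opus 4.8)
The plan is to deduce the bound by reducing to the general deep ReLU estimation result of \cite{farrell} (their Theorem 2), whose analysis is driven by exactly two structural properties of the loss: a Lipschitz bound on the per-sample loss as a function of the network output, and a local quadratic-curvature (Bernstein-type) condition relating the excess population risk $E(L(y,f)-L(y,f^*))$ to the squared $L_2$ distance $E((f-f^*)^2)$. These are precisely the content of Lemmas 2.1 and 2.2: Lemma 2.1 supplies the Lipschitz constant $\max(\tau,1-\tau)$, and Lemma 2.2 supplies the two-sided bound $c_1 E((f-f^*)^2)\le E(L(y,f)-L(y,f^*))\le c_2 E((f-f^*)^2)$ with $c_1,c_2$ bounded away from $0$. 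Together they say that, up to constants on a bounded domain, the BQR excess risk behaves like a squared loss, which is exactly Eqn~(2.1) of \cite{farrell}. Once this identification is made, the theorem is essentially a corollary, so the work is in checking compatibility rather than inventing new probabilistic machinery.

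With the structural conditions in hand, I would proceed in four steps. First, fix the ReLU architecture: by Assumption~2.2 the target $f^*$ lies in the Sobolev ball $W^{\beta,\infty}([-1,1]^d)$ and by Assumption~2.1 it is bounded by $M$, so the Yarotsky-type approximation theory used in \cite{farrell} yields a feedforward ReLU class $F_{DNN}$ with $W$ parameters (width and depth chosen as functions of $n,\beta,d$) whose sup-norm error is captured by $\epsilon_{f^*}$ as in Assumption~2.3, while the constraint $\|f\|_\infty\le 2M$ keeps the approximant in the regime where Lemmas~2.1--2.2 apply. Second, decompose the excess risk of the empirical risk minimizer $\hat f$ into a stochastic (estimation) term and a bias (approximation) term; the bias term contributes $\epsilon_{f^*}^2$ after passing through the upper curvature bound. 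Third, control the stochastic term by the localized empirical-process argument of \cite{farrell}: the Lipschitz property of Lemma~2.1 bounds the increments of the empirical process by those of $f\mapsto (f-f^*)(x)$, covering-number bounds for ReLU nets supply the complexity factor $W\log W$, the lower curvature bound $c_1 E((f-f^*)^2)$ localizes the analysis to get a fast rate of order $\tfrac{W\log W}{n}\log n$, and peeling over a geometric grid of radii together with a high-probability deviation inequality produces the residual $\tfrac{\log\log n+\gamma}{n}$ term at confidence $1-e^{-\gamma}$. Finally, combining the three contributions through the lower curvature bound gives $E((\hat f-f^*)^2)\le C\big(\tfrac{W\log W}{n}\log n+\tfrac{\log\log n+\gamma}{n}+\epsilon_{f^*}^2\big)$, which is $B$.

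The main obstacle is the bookkeeping that certifies the BQR loss genuinely satisfies the hypotheses of \cite{farrell} with constants independent of $n$. In particular, one must verify that the piecewise definition of $P(y=1\mid f_\tau(x))$ does not introduce a kink or an unbounded second derivative at $f_\tau(x)=0$ that would break the curvature condition of Lemma~2.2 — this is where the boundedness in Assumption~2.1 and the unstated regularity conditions of Lemma~2.2 are essential — and that the resulting $c_1,c_2$ depend only on $\tau$ and $M$, so they can be absorbed into $C$. A secondary point requiring care is that \cite{farrell} state their result for losses in a particular normalized form with moment/boundedness controls on the loss and its derivatives; one should confirm that the BQR negative log-likelihood, possibly after rescaling, meets these uniformly over the constrained network class $\{f: \|f\|_\infty\le 2M\}$. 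Once these checks are dispatched, invoking their Theorem~2 yields the claim directly.
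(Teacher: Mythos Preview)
Your proposal is correct and takes essentially the same approach as the paper: verify via Lemmas~2.1 and~2.2 that the BQR loss satisfies the Lipschitz and curvature conditions (Eqn~(2.1)) required by \cite{farrell}, and then invoke their Theorem~2 directly. The paper's proof is in fact even terser than yours---it simply cites Theorem~2 of \cite{farrell}---so your detailed four-step sketch of the Farrell argument and the bookkeeping checks you flag go well beyond what the paper itself provides.
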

The above non-asymptotic error bounds can be used to tune and optimize the architectures as a function of the DNN architecture complexity $W$, sample size $n$, confidence $\gamma$ and approximation error $\epsilon_{f^*}$. Most refreshingly, it allows us to look at the DNN as a decompressor: given quantized outputs $y$, and inputs $x$, we can train a DNN to decompress the signal $f(x)$. 
\begin{figure}[htb!]
\centering
\includegraphics[width=0.7\columnwidth]{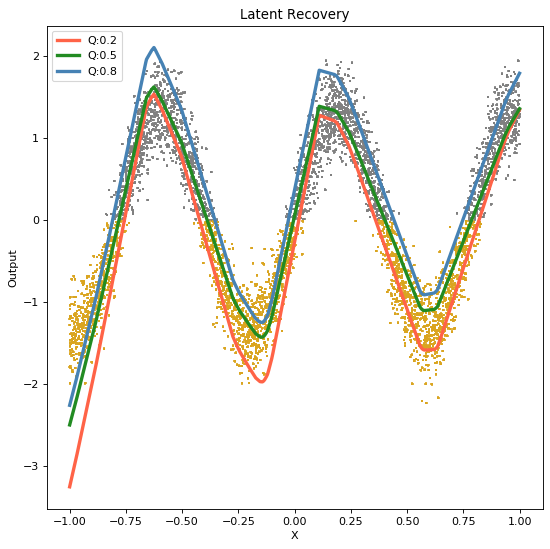}
\caption{Recovering the latent response $f_{\tau}(x)$ for D1}.
\label{fig:latent_d1}
\end{figure}
In Fig.~\ref{fig:latent_d1}, we show the estimated conditional quantiles of the latent response. It is fascinating to see that the original signal is recovered, despite observing only quantized labels at the time of training. In addition, it is worth noting that multiple quantiles are simultaneously estimated. In order to prevent quantile crossing, we add a regularization term,
$$L_{BQC} = \sum_{i=1}^{n} \sum_{p=1}^{m-1} \max(0,Q_{x_i}(\tau_{p}) -Q_{x_i}(\tau_{p+1}) $$ giving us regularized BQR:$L_{BQ} = L_{BQR} + \lambda L_{BQC}$. In this work, we find that for most cases a $\lambda=1$ was sufficient to prevent crossing. Next, we quantify how well the latent functions are learnt in terms of coverage, where coverage is an estimate of $P_{x,y}( x < Q_{x}(\tau) )$ that should be close to the nominal value $\tau$

\subsubsection{Coverage in Simulated Datasets}
To verify that the quantiles we obtain possess the coverage property, we scale the true latent distribution and the obtained quantiles. The threshold value is first subtracted from the true latent, and the resulting distribution is then normalized to have 0 mean and unit standard deviation. For the quantiles, the mean and standard deviation of the median is obtained, and all the quantiles are normalized using these terms. This normalization step is done solely for comparing against the latent.   
\par We created a collection of datasets as per the distributions listed below, sampling $X$ from $U(-1,1)$ and classified the points as class 0 if $y_i \leq \mu$. We computed the coverage for the generated quantiles. The datasets used are formulated as follows, with D5 and D6 being variants of the dataset proposed in \cite{Anand_2019_epsilon}. The results can be seen in Table \ref{tab:class_coverage_simulated_datasets}
\begin{itemize}
    \item D1 : $y_i = 5\sin{8x_i}  + \zeta_i$, where $\zeta_i \sim N(0,1)$
    \item D2 : $y_i = \nicefrac{(4x_i)^2}{2}  + \zeta_i$, where $\zeta_i \sim N(0,0.5)$
    \item D3 : $y_i = \sqrt{(4x_i)^2+5} - 2.5 + \zeta_i$, where $\zeta_i \sim U(-0.3,0.3)$ 
    
    \item D4 :$y_i= \zeta_i + 
    \begin{cases}
        2x_i \sin(1/2x_i) & x \neq 0 \\
        0 & x = 0 \\
    \end{cases}
    $, where $\zeta_i \sim N(0,0.5)$
    \item D5 :  $y_i = 2((1 - 3x_i+2(3x_i)^2)\exp{-0.5(3x_i)^2} -1.5)+ \zeta_i$, where $\zeta_i \sim N(0,0.25)$
    \item D6 :  $y_i = 2((1 - 3x_i+2(3x_i)^2)\exp{-0.5(3x_i)^2} -1.5) + \nicefrac{\zeta_i}{4}$, where $\zeta_i \sim \chi^2(2)$
\end{itemize}

\begin{table}[htb!]
    \centering
    \begin{tabular}{llllllllll}
    \toprule
    \multicolumn{1}{c}{} &  \multicolumn{9}{c}{Coverage for $\tau$ } \\
    \cmidrule(r){2-10}
    Dataset & 0.1 & 0.2 & 0.3 & 0.4 & 0.5 & 0.6 & 0.7 & 0.8 & 0.9 \\
    \midrule
    D1 & 0.10 & 0.18 & 0.28 & 0.40 & 0.51 & 0.59 & 0.69 & 0.81 & 0.92  \\
    D2 & 0.14 & 0.23 & 0.33 & 0.45 & 0.51 & 0.60 & 0.69 & 0.79 & 0.84 \\
    D3 & 0.10 & 0.19 & 0.28 & 0.39 & 0.51 & 0.58 & 0.69 & 0.81 & 0.86 \\
    D4 & 0.04 & 0.09 & 0.22 & 0.37 & 0.50 & 0.67 & 0.75 & 0.82 & 0.91 \\ 
    D5 & 0.08 & 0.20 & 0.36 & 0.46 & 0.53 & 0.61 & 0.70 & 0.82 & 0.89 \\ 
    D6 & 0.05 & 0.18 & 0.32 & 0.40 & 0.49 & 0.55 & 0.69 & 0.81 & 0.90 \\
    \bottomrule
    \end{tabular}
    \caption{Coverage values for simulated datasets}
    \label{tab:class_coverage_simulated_datasets}
\end{table}
\subsubsection{Coverage in Real World Datasets}
We use regression datasets taken from the UCI Machine Learning Repository \cite{Dua_UCI_Repo}, and convert them into classification tasks by thresholding the target, and converting it into a binary label. We use two different thresholds, one to simulate a balanced classification task, and the other to simulate an imbalanced problem. The results can be seen in Table \ref{tab:cov_uci_results}. The scaling methodology is the same as the method described for the simulated datasets. For both simulated and real-world datasets, we observe that reported coverages are very close to their nominal values around the median, and the precision decreases as the nominal quantile moves away from the median. While we do not know the distribution of the estimators, from the classical QR perspective, it suggests that, the precision at the lower quantiles is more dominated by the density terms, than by the $\tau(1-\tau)$ factor \cite{koenker_2005}.
\begin{table}[htb!]
  \centering
  \begin{tabular}{lllllllllllll}
    \toprule
    \multicolumn{4}{c}{} &  \multicolumn{9}{c}{Coverage for $\tau$ } \\
    \cmidrule(r){5-13}
    Dataset & $t$ & Acc. & RMSE & 0.1 & 0.2 & 0.3 & 0.4 & 0.5 & 0.6 & 0.7 & 0.8 & 0.9 \\
    \midrule
    Abalone & 9 & 0.81 & 0.83 & 0.08 & 0.19 & 0.27 & 0.35 & 0.55 & 0.68 & 0.79 & 0.88 & 0.97 \\
     & 7 & 0.89 & 0.89 & 0.08 & 0.19 & 0.31 & 0.43 & 0.54 & 0.65 & 0.75 & 0.84 & 0.96 \\
    \midrule
    Boston & 22 & 0.88 & 0.74 & 0.17 & 0.22 & 0.30 & 0.39 & 0.50 & 0.62 & 0.71 & 0.80 & 0.90 \\
    & 18 & 0.90 & 0.82 & 0.09 & 0.20 & 0.34 & 0.42 & 0.53 & 0.67 & 0.74 & 0.80 & 0.86 \\
    \midrule
    California & 180K & 0.80 & 0.77 & 0.05 & 0.15 & 0.26 & 0.36 & 0.51 & 0.62 & 0.75 & 0.84 & 0.93 \\
    & 200K & 0.79 & 0.83 & 0.10 & 0.25 & 0.37 & 0.48 & 0.57 & 0.66 & 0.73 & 0.79 & 0.87 \\
    \midrule
    Concrete & 35 & 0.88 & 0.62 & 0.09 & 0.17 & 0.26 & 0.36 & 0.50 & 0.64 & 0.76 & 0.88 & 0.94 \\
    & 50 & 0.91 & 0.66 & 0.11 & 0.18 & 0.28 & 0.41 & 0.50 & 0.65 & 0.79 & 0.83 & 0.87 \\
    \midrule
    Energy & 20 & 0.99 & 0.40 & 0.13 & 0.18 & 0.27 & 0.39 & 0.50 & 0.66 & 0.79 & 0.85 & 0.91 \\
    & 15 & 0.94 & 0.52 & 0.07 & 0.16 & 0.29 & 0.37 & 0.51 & 0.65 & 0.74 & 0.81 & 0.90 \\
    \midrule
    Protein & 5 & 0.82 & 0.82 & 0.10 & 0.22 & 0.34 & 0.44 & 0.53 & 0.63 & 0.73 & 0.83 & 0.93 \\
    & 9 & 0.81 & 0.84 & 0.09 & 0.16 & 0.30 & 0.42 & 0.53 & 0.62 & 0.72 & 0.82 & 0.92 \\
    \midrule
    Redshift & 0.65 & 0.91 & 0.83 & 0.09 & 0.18 & 0.26 & 0.37 & 0.48 & 0.61 & 0.81 & 0.86 & 0.92 \\
    & 0.9 & 0.92 & 0.88 & 0.07 & 0.10 & 0.15 & 0.32 & 0.45 & 0.70 & 0.77 & 0.88 & 0.96 \\
    \midrule
    Wine & 5 & 0.82 & 0.82 & 0.08 & 0.18 & 0.27 & 0.38 & 0.49 & 0.61 & 0.72 & 0.83 & 0.92 \\
    & 6 & 0.93 & 0.93 & 0.03 & 0.12 & 0.24 & 0.37 & 0.51 & 0.64 & 0.73 & 0.80 & 0.86 \\
    \midrule
    Yacht & 2 & 0.98 & 0.63 & 0.17 & 0.27 & 0.35 & 0.43 & 0.49 & 0.55 & 0.64 & 0.81 & 0.89 \\
    & 7.5 & 0.98 & 0.60 & 0.19 & 0.34 & 0.41 & 0.45 & 0.51 & 0.69 & 0.84 & 0.91 & 0.98 \\
    \bottomrule
  \end{tabular}
  \caption{Coverage results for binary classification using thresholded UCI regression datasets}
  \label{tab:cov_uci_results}
\end{table}

\section{Establishing Uncertainty in Classification with Quantiles}

\cite{nguyen2015deep} studied how Deep Learning models can be fooled easily, despite the high confidence in the predictions. 
\cite{Gal2016Dropout, kannan, Pragya} discuss why the widely used class probabilities, estimated with logistic loss, cannot be used as confidence measures, as they can overestimate the confidence, and are not consistent. One way to approach the problem is by Uncertainty Quantification (UQ). \cite{NIPS2017_Balaji} proposed Deep Ensembles with a Bayesian justification to report Monte Carlo estimates of prediction variance. \cite{Balaji_uq_2} posed UQ as a min-max problem, where a single model, instead of an ensemble, is input-distance aware.
\cite{natasa} proposed using Quantiles to report the PIs in the regression setting. It is straightforward to establish PIs using the conditional quantiles even in the binary classification setting since
$P_{x,y}( x < Q_{x}(\tau) ) = \tau$. It follows then that, 
$[Q_{x}(0.5\tau), Q_{x}(1-0.5\tau)]$ is a $100(1-\tau)$\% PI at $x$. Any monotonic transformation, such as a sigmoid function or an  Indicator function, can be used produce PIs in the class probabilities space or the label space. Along with measuring the precision, it is sometimes helpful to know when to withheld from a making prediction. Recently, \cite{maya} proposed \textit{Trustscore} based on how close a sample is to a set of high trustworthy samples to that affect. In addition to reporting the precision (via PIs), we can also measure the confidence via confidence score $\delta$ defined as follows:
\begin{defn}
 Confidence Score ($\delta$), defined for a sample $x$ as 
$$ \delta = \inf_{d \in (0,0.5)} \{d:  \text{ s.t } Q_{x}(0.5-d) \leq 0 \leq Q_{x}(0.5+d) \} $$
\end{defn}
The Confidence Score $\delta$ is thus a metric of how close to the decision boundary the latent function for $x_{i}$ is. As $\delta$ increases, the likelihood of the point being misclassified reduces, as the quantiles for the latent response move further away from the decision boundary.  The relationship between misclassiication rate and confidence can be explicitly stated as follows:
\begin{thm}
An instance with confidence score $\delta$ has a misclassification rate of $0.5- \delta$ 
\end{thm}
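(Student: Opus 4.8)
The plan is to translate the confidence score $\delta$ into a single quantile level, and then read off the misclassification probability directly from the assumed latent (ALD) model. First I would introduce $\tau^{*}\in(0,1)$ as the level for which the conditional quantile vanishes, $Q_{x}(\tau^{*})=0$. Under the no‑crossing regularization and the smoothness assumptions, $\tau\mapsto Q_{x}(\tau)$ is continuous and nondecreasing, so such a $\tau^{*}$ exists and is unique whenever $0$ lies in the range of $Q_{x}(\cdot)$ (if it does not, the instance is classified with certainty and one sets $\delta=0.5$ by convention). Using monotonicity, the interval $[\,Q_{x}(0.5-d),\,Q_{x}(0.5+d)\,]$ is nested and increasing in $d$, and it contains $0$ if and only if $Q_{x}(0.5-d)\le 0\le Q_{x}(0.5+d)$, i.e. if and only if $d\ge|\tau^{*}-0.5|$. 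Taking the infimum gives $\delta=|\tau^{*}-0.5|$.

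Next I would compute the misclassification probability at $x$. The classifier predicts the label of the median latent, $\hat y = I\!\left(Q_{x}(0.5)\ge 0\right)=I(\tau^{*}\le 0.5)$, while the true label is $y=I(z\ge 0)$ with $z$ the latent response whose conditional $\tau$-quantile is $Q_{x}(\tau)$; in particular $P(z\le Q_{x}(\tau)\mid x)=\tau$ and, by continuity of the ALD, $P(z\le 0\mid x)=\tau^{*}$. Split into two cases. If $\tau^{*}>0.5$ then $\hat y=0$ and the error probability is $P(y=1\mid x)=P(z\ge 0\mid x)=1-\tau^{*}$; since here $\delta=\tau^{*}-0.5$, this equals $0.5-\delta$. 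If $\tau^{*}<0.5$ then $\hat y=1$ and the error probability is $P(y=0\mid x)=P(z<0\mid x)=\tau^{*}$; since here $\delta=0.5-\tau^{*}$, this again equals $0.5-\delta$. Hence in all cases the conditional misclassification rate is $0.5-\delta$, which also matches the qualitative remark that larger $\delta$ pushes the quantiles away from the decision boundary and lowers the error.

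I expect the main obstacle to be conceptual rather than computational: one must (i) fix the precise decision rule used at scoring time and argue it coincides with thresholding the median, and (ii) be careful that "misclassification rate" here means the Bayes-style conditional error $P(y\neq\hat y\mid x)$ under the postulated ALD latent model, so a short remark making this explicit is warranted. The remaining technical points — monotonicity and continuity of $\tau\mapsto Q_{x}(\tau)$ so that $\tau^{*}$ is well defined, continuity of the latent distribution so that $P(z=0\mid x)=0$, and the boundary conventions when $0\notin\operatorname{range}Q_{x}(\cdot)$ or at an exact median tie — are routine and handled by the standing assumptions together with the crossing penalty $L_{BQC}$.
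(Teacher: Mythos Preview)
Your proposal is correct and is essentially the paper's own argument, just phrased in the dual variable: you parametrize by the level $\tau^{*}$ at which the conditional quantile crosses zero, whereas the paper parametrizes by the median $\mu=Q_{x}(0.5)$ and writes $P(z\le\mu)=P(z\le 0)+P(0<z\le\mu)$ with $\delta=P(0<z\le\mu)$. Since $P(z\le 0)=\tau^{*}$ and $P(0<z\le\mu)=0.5-\tau^{*}$, the two case splits ($\tau^{*}\lessgtr 0.5$ versus $\mu\gtrless 0$) and the resulting identities coincide; your extra remarks on monotonicity, the decision rule, and boundary conventions are more careful than the paper but not a different route.
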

\begin{proof} Let $\mu$ be the median of the latent response $z$, i.e, $\mu = Q_x(0.5)$, and  that $\mu \ge 0$. Note that, 
\begin{equation*}
P(z<=\mu) = P(z<=0) + P(0 < z \le \mu)\\
\end{equation*}
By definition $P(z<=\mu)=0.5$, $\delta = P(0 < z <=\mu)$, and $P(z <=0)$ is the misclassification rate. Using same reasoning, we can show that, when $\mu < 0$, the misclasification rate $P(z>0)$ is $0.5-\delta$.  Hence, the misclassification rate is $0.5-\delta$.
\end{proof}
To verify the relationship between misclassification and confidence, we use the same datasets and thresholds used in our coverage computation tests, and compute the goodness of fit ($R^2$) score of the expected misclassification rate vs $\delta$ curve on the obtained values misclassification rate per delta, The results can be seen in Table \ref{tab:misc_vs_delta_artificial} \footnote{Note: The $R^2$ score for yacht is correct. The value is computed using scikit learn's \href{https://scikit-learn.org/stable/modules/model\_evaluation.html\#r2-score}{\texttt{r2\_score}}, which ranges from $-\infty$ to 1.0}. In addition, by omitting samples whose $\delta$-score is below a certain threshold, more confident predictions can be obtained, as per the theorem described above. We term this threshold as the  \textit{model confidence}. However, it is important to keep in mind that as this tolerance for a certain $\delta$ becomes more rigid, the number of acceptable decisions will also reduce. We define the \textit{retention rate} for a given confidence threshold as the ratio of number of points having a $\delta$-Score less than that confidence score to the samples available for decision. Table \ref{tab:delta_metrics_binaryclassification} shows the retention ($r_{r}$) and misclassification rates ($m_{r}$) for some standard binary classification datasets \cite{Dua_UCI_Repo,elson2007asirra,Kermany2018IdentifyingMD,IMDB}. 
\begin{table}[htb!]
    \centering
    \begin{tabular}{lllllllllll}
    \toprule
    \multicolumn{3}{c}{} &  \multicolumn{5}{c}{$\delta$-Score} \\
    \cmidrule(r){4-8}
    Dataset & t & Rate & 0.1 & 0.2 & 0.3 & 0.4 & 0.5 & $R^{2}$ \\
    \midrule
        Abalone & 9 & $m_{r}$ & 0.40 & 0.35 & 0.25 & 0.15 & 0.04 & 0.89\\
			&   & $r_{r}$& 1.00 & 0.84 & 0.68 & 0.52 & 0.33 & --	 \\
	        \cmidrule(r){2-9}
            & 7 & $m_{r}$ & 0.45 & 0.33 & 0.23 & 0.13 & 0.02 & 0.95\\
			&   & $r_{r}$ & 1.00 & 0.94 & 0.89 & 0.80 & 0.63 & 	--  \\
    \midrule
    Boston  & 22 & $m_{r}$ & 0.44 & 0.33 & 0.23 & 0.13 & 0.02 & 0.96\\
			&    & $r_{r}$ & 1.00 & 0.96 & 0.91 & 0.86 & 0.74 &   -- \\
			\cmidrule(r){2-9}
            & 18 & $m_{r}$ & 0.30 & 0.27 & 0.18 & 0.10 & 0.02 & 0.78\\
			&    & $r_{r}$ & 1.00 & 0.97 & 0.92 & 0.87 & 0.78 &  --  \\
    \midrule
    California & 1.8 & $m_{r}$ & 0.41 & 0.35 & 0.24 & 0.13 & 0.03 & 0.94 \\
			   &     & $r_{r}$ & 1.00 & 0.92 & 0.84 & 0.74 & 0.60 & --  \\
			   \cmidrule(r){2-9}
		       & 2.0 & $m_{r}$ & 0.43 & 0.37 & 0.26 & 0.15 & 0.03 & 0.88 \\
			   &	 & $r_{r}$ & 1.00 & 0.92 & 0.84 & 0.76 & 0.61 & -- \\ 
    \midrule
    Concrete & 35 & $m_{r}$ & 0.42 & 0.31 & 0.23 & 0.11 & 0.04 & 0.97\\
			 &	  & $r_{r}$ & 1.00 & 0.94 & 0.87 & 0.79 & 0.66 & -- \\
			 \cmidrule(r){2-9}
			 & 50 & $m_{r}$ & 0.46 & 0.34 & 0.20 & 0.15 & 0.01 & 0.94\\
			 & 	  & $r_{r}$ & 1.00 & 0.97 & 0.92 & 0.88 & 0.81 & --  \\
    \midrule
    Energy   & 20 & $m_{r}$ & 0.38 & 0.21 & 0.23 & 0.08 & 0.00 & 0.89\\
			 & 	  & $r_{r}$ & 1.00 & 0.99 & 0.99 & 0.99 & 0.97 & -- \\
			 \cmidrule(r){2-9}
			 & 15 & $m_{r}$ & 0.42 & 0.41 & 0.39 & 0.21 & 0.00 & 0.54\\
			 &	  & $r_{r}$ & 1.00 & 0.96 & 0.93 & 0.89 & 0.84 &  -- \\
    \midrule
    Protein  & 5 & $m_{r}$ & 0.37 & 0.36 & 0.24 & 0.13 & 0.04 & 0.90\\
			 & 	 & $r_{r}$ & 1.00 & 0.88 & 0.76 & 0.61 & 0.40 & --  \\ 
			 \cmidrule(r){2-9}
			 & 9 & $m_{r}$ & 0.41 & 0.37 & 0.26 & 0.15 & 0.04 & 0.87\\
			 &	 & $r_{r}$ & 1.00 & 0.88 & 0.75 & 0.61 & 0.40 & -- \\
    \midrule
    Redshift & 0.65 & $m_{r}$ & 0.40 & 0.32 & 0.20 & 0.12 & 0.02 & 0.99\\
			 &      & $r_{r}$ & 1.00 & 0.96 & 0.92 & 0.87 & 0.78 &  -- \\
			 \cmidrule(r){2-9}
			 & 0.9  & $m_{r}$ & 0.39 & 0.28 & 0.19 & 0.19 & 0.01 & 0.89\\
			 &      & $r_{r}$ & 1.00 & 0.97 & 0.93 & 0.87 & 0.81 & -- \\
    \midrule
    Wine 	 & 5 & $m_{r}$ & 0.43 & 0.39 & 0.29 & 0.17 & 0.07 & 0.70\\
			 &	 & $r_{r}$ & 1.00 & 0.87 & 0.74 & 0.57 & 0.33 &  --\\
			 \cmidrule(r){2-9}
			 & 6 & $m_{r}$ & 0.47 & 0.36 & 0.26 & 0.18 & 0.03 & 0.83\\
			 &	 & $r_{r}$ & 1.00 & 0.95 & 0.90 & 0.83 & 0.74	& --  \\
			 
    \midrule
    Yacht 	& 2   & $m_{r}$ & 0.13 & 0.10 & 0.03 & 0.00 & 0.00 & -9.6\\
			& 	  & $r_{r}$ & 1.00 & 1.00 & 0.99 & 0.97 & 0.89 & -- \\
			\cmidrule(r){2-9}
			& 7.5 & $m_{r}$ & 0.24 & 0.10 & 0.00 & 0.00 & 0.00 & -1.6\\ 
			& 	  & $r_{r}$ & 1.00 & 0.99 & 0.98 & 0.94 & 0.86 & -- \\

    \bottomrule
    \end{tabular}
    \caption{Misclassifaction rate per $\delta$-Score in Artificially created classification tasks}
    \label{tab:misc_vs_delta_artificial}
\end{table}

\begin{table}[htbp!]
    \centering
    \begin{tabular}{llllllll}
    \toprule
    \multicolumn{3}{c}{} &  \multicolumn{5}{c}{Confidence Scores($\delta$)} \\
    \cmidrule(r){4-8}
    Dataset & Acc. & Metric & 0.1 & 0.2 & 0.3 & 0.4 & 0.5\\
    \midrule
    Asirra & 0.95 & $m_{r}$& 0.39 & 0.33 & 0.24 & 0.12 & 0.01\\
		   &	  & $r_{r}$ & 1.00 & 0.98 & 0.96 & 0.92 & 0.86\\
	\midrule
    Banknote & 1.00 & $m_{r}$ &0.10 & 0.00 & 0.01 & 0.00 & 0.00\\
			 &	    & $r_{r}$ &1.00 & 0.96 & 0.92 & 0.86 & 0.78\\
	\midrule
    Haberman & 0.76 & $m_{r}$ &0.43 & 0.39 & 0.36 & 0.19 & 0.12\\
			 &      & $r_{r}$ &1.00 & 0.88 & 0.77 & 0.66 & 0.26\\
	\midrule
    Heart Disease & 0.88 & $m_{r}$ &0.46 & 0.36 & 0.23 & 0.12 & 0.04\\
			      &      & $r_{r}$ &1.00 & 0.92 & 0.85 & 0.74 & 0.58\\
	\midrule
    Indian Liver Patient & 0.75 & $m_{r}$ &0.41 & 0.42 & 0.32 & 0.22 & 0.03\\
						 &      & $r_{r}$ &1.00 & 0.87 & 0.58 & 0.42 & 0.32\\
	\midrule
	IMDB     & 0.92 &$m_r$& 0.45 & 0.29 & 0.20 & 0.11 & 0.01\\
	         &      &$r_r$& 1.00 & 0.98 & 0.97 & 0.95 & 0.91\\
	\midrule
    Ionosphere & 0.94 & $m_{r}$ &0.50 & 0.32 & 0.17 & 0.16 & 0.02\\
			   &      & $r_{r}$ &1.00 & 0.97 & 0.95 & 0.90 & 0.82\\
	\midrule
    Pima Indian Diabetes & 0.80 & $m_{r}$ &0.45 & 0.37 & 0.30 & 0.16 & 0.04\\
						 &      & $r_{r}$ &1.00 & 0.88 & 0.74 & 0.58 & 0.37\\
	\midrule
    Pneumonia X-Ray & 0.89 & $m_{r}$ &0.72 & 0.31 & 0.24 & 0.20 & 0.04\\
					&      & $r_{r}$ &1.00 & 0.99 & 0.97 & 0.95 & 0.92\\
	\midrule
    Sonar & 0.94 & $m_{r}$ &0.45 & 0.38 & 0.22 & 0.13 & 0.03\\
		  &      & $r_{r}$ &1.00 & 0.96 & 0.92 & 0.87 & 0.80\\
	\midrule
    Titanic & 0.87 & $m_{r}$ &0.41 & 0.30 & 0.27 & 0.15 & 0.06\\
			&      & $r_{r}$ &1.00 & 0.94 & 0.88 & 0.78 & 0.65\\
	\midrule
    Wisconsin Breast Cancer & 0.97 & $m_{r}$ &0.36 & 0.28 & 0.20 & 0.11 & 0.01\\
							&      & $r_{r}$ &1.00 & 0.99 & 0.97 & 0.95 & 0.91 \\
    \bottomrule
    \end{tabular}
    \caption{Confidence score based metrics for binary classification datasets}
    \label{tab:delta_metrics_binaryclassification}
\end{table}

In addition, we can use the tried and tested classifier metrics, namely AUC, RoC and Precision-Recall curves in order to evaluate the classifier per Confidence Score level. We simply compute the TPR-FPR and Precision-Recall curves for the classifier, only considering points that have a specific confidence level. Figure \ref{fig:conf_auc} shows an example of the same. As one can note, the classifier performance improves when low confidence labels are withheld. The per $\delta$-score performance can be evaluated and used as another metric when deciding whether or not a prediction should be rejected.

\begin{figure}[htb!]
\centering
\includegraphics[width=0.7\columnwidth]{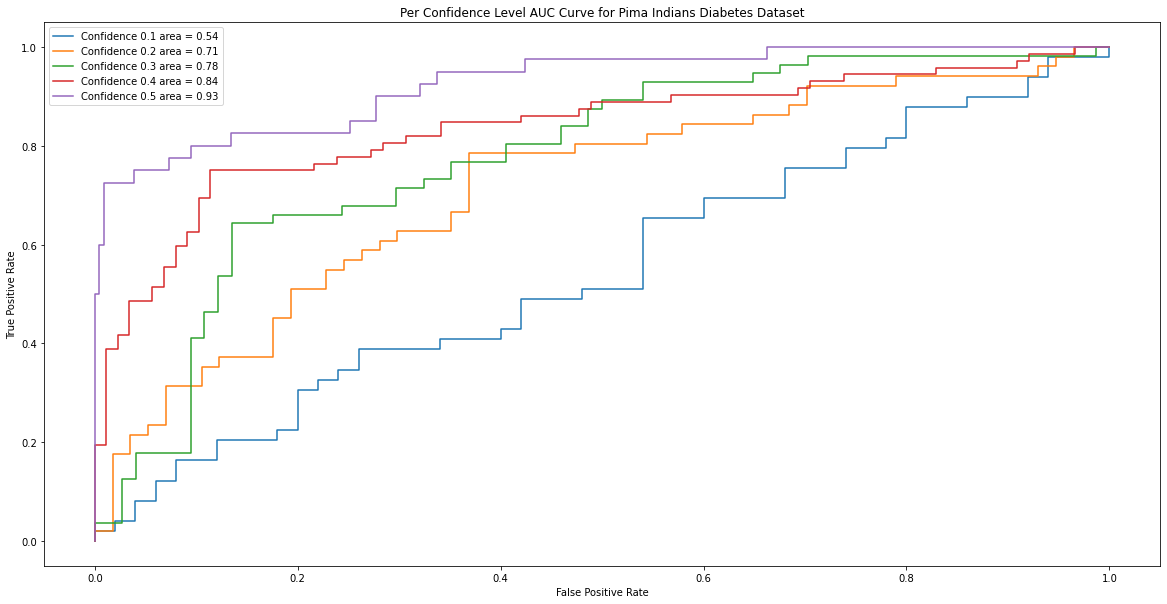}
\caption{Per $\delta$-Score AUC-ROC Curves}
\label{fig:conf_auc}
\end{figure}

\subsection{Comparison with Trustscore}
To compare our $\delta_x$ with TS, we compute both of them for all the samples in a dataset. Following this, we rank the samples based on TS and create 10 equi-distributed bins, one bin per decile.  We the compute the average $\delta_x$ and TS of all points in each bin. As per \cite{maya}, low ranking points are the ones likely to be misclassified, As seen in Table\ref{tab:delta_v_trust_withscores}, as TS bin decile increases, the average $\delta_x$ score also increases as expected, indicating that our method captures the expected trend. 

\begin{table}[htb!]
    \centering
    \begin{tabular}{llllllllllll}
    \toprule
    \multicolumn{2}{c}{} &  \multicolumn{10}{c}{Trustscore Bin} \\
    \cmidrule(r){3-12}
    Dataset & Avg. & 1 & 2 & 3 & 4 & 5 & 6 & 7 & 8 & 9 & 10 \\
    \midrule
    Banknote & $\delta_x$ & 0.49 & 0.50 & 0.50 & 0.50 & 0.50 & 0.50 & 0.50 & 0.50 & 0.50 & 0.50\\
            & TS & 3.69   &  5.61   &  6.95   &  8.22   &  9.50 & 11.12 & 13.01 & 15.69 & 20.08 & 1.79E11\\
    \midrule
    Haberman & $\delta_x$ & 0.22 & 0.24 & 0.25 & 0.28 & 0.32 & 0.36 & 0.37 & 0.41 & 0.44 & 0.45\\
            & TS & 0.60 & 0.94 & 1.13 & 1.39 & 1.71 & 2.20 & 2.83 & 3.73 & 5.45 & 5.3E11\\
    \midrule
    Heart & $\delta_x$ & 0.35 & 0.40 & 0.45 & 0.45 & 0.46 & 0.47 & 0.47 & 0.49  & 0.49 & 0.49  \\
         & TS &  0.86 & 1.08 & 1.19 & 1.27 & 1.36 & 1.46 & 1.59 & 1.81  & 2.19 & 2.3E11\\
    \midrule
    ILP & $\delta_x$ & 0.16 & 0.19 & 0.25 & 0.28 & 0.29 & 0.30 & 0.34 & 0.37 & 0.41 & 0.43\\   
        & TS & 0.51 & 0.81 & 1.00 & 1.14 & 1.26 & 1.42 & 1.64 & 1.89 & 2.38 & 5.1E11\\
    \midrule
    Iono & $\delta_x$ & 0.46 & 0.49 & 0.49 & 0.48 & 0.49 & 0.50 & 0.50 & 0.50 & 0.50 & 0.50\\
         & TS & 0.91 & 1.05 & 1.13 & 1.37 & 1.77 & 2.34 & 3.11 & 4.64 & 6.46 & 2.5E11\\
    \midrule
    Pima & $\delta_x$ & 0.23 & 0.26 & 0.30 & 0.37 & 0.38 & 0.40 & 0.41 & 0.45 & 0.47 & 0.48\\
        & TS &  0.79 & 0.99 & 1.11 & 1.22 & 1.33 & 1.45 & 1.59 & 1.78 & 2.09 & 3.22\\ 
    \midrule
    Sonar & $\delta_x$ & 0.44 & 0.48 & 0.49 & 0.49 & 0.48 & 0.49 & 0.49 & 0.49 & 0.49 & 0.49 \\
          & TS & 0.94 & 1.06 & 1.12 & 1.20 & 1.30 & 1.39 & 1.50 & 1.63 & 2.03 & 2.54 \\
    \midrule
    Titanic & $\delta_x$ & 0.27 & 0.38 & 0.41 & 0.44 & 0.45 & 0.46 & 0.45 & 0.46 & 0.46 & 0.47\\
            & TS & 0.78 & 1.39 & 2.32 & 4.25 & 6.34 & 10.97 & 25.55 & 1.1E9 & 6.3E10 & 8.6E11\\
    \midrule
    WBC & $\delta_x$ & 0.43 & 0.48 & 0.49 & 0.50  & 0.50 & 0.50 & 0.50 & 0.50 & 0.50 & 0.50\\
       & TS &1.19 & 1.61 & 1.95 & 2.49 & 3.55 & 8.3E10 & 1.6E12 & 1.8E12 & 2.1E12 & 2.4E12\\ 
    \bottomrule
    \end{tabular}
    \caption{Average $\delta_x$-score and Trustscore values per Trustscore bin}
    \label{tab:delta_v_trust_withscores}
\end{table}

However, note that, the average TS per bin can increase dramatically, as TS is not calibrated and is not intended to be used in isolation, but rather as a relative metric.  Further, the real benefit of our approach can be seen in datasets such as Ionosphere, Banknote and Sonar, wherein even the lowest TS bins have very high $\delta_x$ values. This makes perfect sense, as these are easy to classify datasets. However, by construction, TS involves ranking the samples, and as a result, it is possible for some samples with low confidence to get high TS and vice versa, which is undesirable. Unlike TS, $\delta_x$ is well calibrated due to Theorem 3.2, and requires no other models to be run on the top, thanks to the conditional quantiles.

\section{Robustness to Label Noise}
As noted previously in Section 3, BCE can be overconfident or can be fooled easily in the presence of noise. While confidence scores may help detect such spurious cases, is it possible to reduce them in the first place? That brings us to developing robust estimation techniques.  It is widely established that Quantiles are a type of robust estimators to noise in the response variable (vertical noise). In the classification setting, \cite{sastry} showcase the ability of the MAE of the class probabilities being more robust to label noise than Categorical Cross Entropy but note that training under MAE could be slow. In our case, having shown the promise of BQR in UQ, we are interested in seeing whether median class probabilities are robust to label noise. To study this, we use the same networks as before, with one trained on BCE, and the other BQR. For each dataset, we vary the percentage of wrongly labelled samples in the training set, and compare the accuracy of the model on the entire real dataset. The results can be seen in Table \ref{tab:label_noise}. At no noise, the BQR based classifier is usually equivalent to, or slightly less accurate than the BCE one, however, as noise increases (20\% and above), eventually the BQR classifier begins to outperform the BCE one, but it is to be noted that these noise levels are extremely high when this occurs
\begin{small}
\begin{table}[htb!]
  \centering
  \begin{tabular}{lllllll}
    \toprule
    \multicolumn{2}{c}{} &  \multicolumn{5}{c}{$\%$ of Flipped Labels } \\
    \cmidrule(r){3-7}
    Dataset & Loss & 0\% & 10\% & 20\% & 30\% & 40\%\\
    \midrule
    Banknote & BCE & 1.000 & 1.000 & 0.998 & 0.986 & 0.925 \\
	 & BQR & 1.000 & 0.999 & 0.997 & 0.989 & 0.939\\
    \midrule
    Haberman & BCE & 0.767 & 0.766 & 0.744 & 0.733 & 0.642 \\
	 & BQR & 0.764 & 0.763 & 0.749 & 0.735 & 0.688 \\
    \midrule
    Heart & BCE & 0.919 & 0.881 & 0.822 & 0.723 & 0.645 \\
	 & BQR & 0.899 & 0.859 & 0.836 & 0.785 & 0.700 \\
    \midrule
    Ionosphere & BCE & 0.962 & 0.923 & 0.881 & 0.799 & 0.666 \\
	 & BQR & 0.950 & 0.916 & 0.887 & 0.841 & 0.706 \\
    \midrule
    Pima & BCE & 0.817 & 0.803 & 0.776 & 0.714 & 0.618 \\
	 & BQR & 0.802 & 0.792 & 0.776 & 0.735 & 0.683 \\
    \midrule
    Sonar & BCE & 0.957 & 0.880 & 0.786 & 0.700 & 0.586 \\
	 & BQR & 0.946 & 0.875 & 0.801 & 0.718 & 0.607 \\
    \midrule
    Titanic & BCE & 0.874 & 0.868 & 0.858 & 0.828 & 0.756 \\
	 & BQR & 0.872 & 0.866 & 0.859 & 0.845 & 0.805 \\
    \midrule
    WBC & BCE & 0.978 & 0.970 & 0.964 & 0.930 & 0.841 \\
	 & BQR & 0.975 & 0.970 & 0.967 & 0.951 & 0.917 \\
    \bottomrule
  \end{tabular}
  \caption{BCE vs BQR Loss on Label Noise}
  \label{tab:label_noise}
\end{table}
\end{small}

\section{Explaining the predictions}
A strong criticism of Deep Learning models is that they are black-boxes in nature \cite{Rudin}. A wide variety of techniques that attempt to explain the model predictions in terms of activations, saliency maps, and counter-factuals based on gradient propagation are proposed and are actively being developed \cite{shrikumar, saliency, sundar}. In majority of the cases, the same techniques can be applied to DNNs fit with BQR as well. There are additional classes of explanations for mean predictions like  \texttt{shapley} \cite{shapely} and \texttt{LIME} \cite{LIME}. Below, we show, how the conditional quantiles can be used to estimate conditional effects, as well as report conditional means. Recall that $Q_{x}(\tau)$ is the conditional quantile at the covariate $x$, where $\tau$ is chosen over a set of  discrete values $ T \in \{\tau_0, \tau_1, \tau_2,\hdots,\tau_m,\tau_{m+1}\}$ with $\tau_0=0, \tau_{m+1}=1$, and there are $m$ outputs available from the neural net corresponding to the remaining values of $\tau$. We can get a smoothed version of the conditional quantiles by:
\begin{eqnarray*}
Q^s_{x}(\tau)_{\tau \in (0,1) } = \sum_{i=0}^m Q_x(\tau_{i}) \int_{p=\tau_{i}}^{\tau_{i+1}} \frac{1}{h} K(\frac{\tau-p}{h}) dp
\end{eqnarray*}
where $K(.)$ is suitable kernel with bandwidth parameter $h$ \cite{parzen_79}. In our examples, we used a Gaussian kernel with bandwidth set to 0.1. Now, one can immediately compute any univariate statistic. In particular, the mean response can be computed as: $E(f(x)) = \int_{\tau=0}^1 Q^s_{x}(\tau) d\tau$. Likewise, $Var(f(x))$ can also be computed. In fact, any quantity of interest can be computed simply by post-processing the smoothed full distribution \cite{parzen_unified}.

\begin{figure}[htb!]
\centering
\includegraphics[width=0.7\columnwidth]{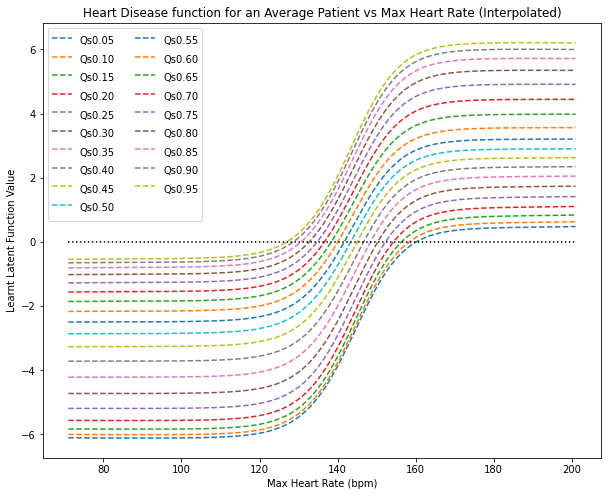}
\caption{Heart Disease Latent vs Max Heart Rate for an Average patient (Interpolated)}
\label{fig:heart_latent_interpolated}
\end{figure}

\begin{figure}[htb!]
\centering
\includegraphics[width=0.7\columnwidth]{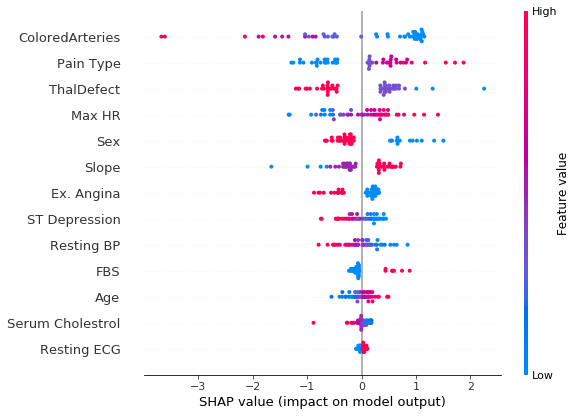}
\caption{An Example of the obtained \texttt{Shapely} statistics of the mean response of the Heart Disease Dataset}
\label{fig:shap_results}
\end{figure}

Figure \ref{fig:heart_latent_interpolated} showcases how quantiles can be used to aid in explainable predictions. In these graphs, the average metrics of a patient in the heart disease dataset were computed, and the quantiles were predicted using these average parameters while varying the maximum heart rate from the the minimum recorded value to the maximum, in steps of 1. The figure graphically showcases the region of uncertainty, something which cannot be obtained from conventional binary classifiers as they provide only a single threshold value. Figure \ref{fig:shap_results} showcases the \texttt{shapely} summary statistics of the mean response of the latent on the test data of the heart disease dataset via quantile interpolation.
\par  The smoothed quantiles also show how it is possible obtain more fine grained values of the confidence metric $\delta$, while keeping the number of prediction quantile outputs manageable.

\section{Lipschitz Adaptive Learning Rates: BQR Loss}
A critical parameter in training DNNs via Stochastic Gradient Descent is the learning rate. One of the early approaches to adapt the learning rate is by recognizing the inverse relationship between the step size of gradient descent update and the Lipschitz constant of the function being optimized \cite{armijo1966}. Since the Lipschitz constant is generally unknown apriori, \cite{Michael_97,Michael_99} estimate a local approximation during training the feed forward networks. Recently works have derived adaptive Learning Rates by exploiting the gradient properties of Deep ReLU networks \cite{Saha_ALR, Saha_ALR_Check} , and successfully applied them to train models on large datasets \cite{Saha_Parsimonious}. 
We summarize their results in the following proposition : 
\begin{prop}
In a Deep ReLU network, let constant $k_z$ be the supremum of gradients w.r.t the function, and let $L$ be the Lipschitz constant of the Loss. Then, the adaptive Learning rate $\eta$ is
$\eta = {(k_z L)}^{-1}$, where the weight update rule is:
$w^{t} = w^{t-1} -  \eta \nabla L(f(x))$
\end{prop}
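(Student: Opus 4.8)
The plan is to derive the stated learning rate from the classical \emph{descent lemma} for gradient descent: if the training objective $J(w) := L(y; f(x;w))$ has a $K$-Lipschitz gradient in $w$ (equivalently, $J$ is $K$-smooth), then the update $w^{t} = w^{t-1} - \eta\,\nabla_w J$ with any $\eta \le 1/K$ is guaranteed to be non-increasing in $J$, and $\eta = 1/K$ is the canonical choice — this is precisely the ``inverse relationship'' between step size and Lipschitz constant alluded to in the text. The whole task therefore reduces to producing an explicit upper bound $K \le k_z L$ in terms of the two named quantities, and then reading off $\eta = (k_z L)^{-1}$.

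First I would decompose the gradient by the chain rule, $\nabla_w J = \frac{\partial L}{\partial f}\,\nabla_w f(x;w)$. For a deep ReLU network, on the interior of any fixed activation region the map $w \mapsto f(x;w)$ is multilinear in the per-layer weight matrices (it is a product of weight matrices interleaved with fixed $0/1$ diagonal activation matrices), hence $C^\infty$ there, with $\|\nabla_w f\|$ bounded by a finite constant $k_z$ since $x \in [-1,1]^d$ and the weights remain in a bounded set along the trajectory. Differentiating the product and using the triangle inequality, $\|\nabla_w J(w) - \nabla_w J(w')\|$ splits into a term governed by $\sup|\partial_f L|$ times the variation of $\nabla_w f$, and a term governed by the variation of $\partial_f L$ times $\sup\|\nabla_w f\|$. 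Lemma 2.1 bounds $\sup|\partial_f L| \le L = \max(\tau, 1-\tau)$, and Lemma 2.2 controls the second-order behaviour of $L$ needed to bound the variation of $\partial_f L$; absorbing the network-side constants into $k_z$ and the loss-side constants into $L$ yields $K \le k_z L$, hence $\eta = (k_z L)^{-1}$. This mirrors the argument of \cite{Saha_ALR, Saha_ALR_Check}, the only new ingredient being Lemma 2.1, which supplies the loss-side factor for BQR.

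The main obstacle is that a ReLU network is only \emph{piecewise} smooth in $w$, so $J$ is not globally $C^1$ and the descent lemma cannot be invoked verbatim. I would handle this by establishing the bound $K \le k_z L$ uniformly over the interior of every activation region, and then arguing that because $\eta = (k_z L)^{-1}$ is small, a single update either stays inside one region — where the guarantee applies directly — or induces only finitely many region crossings that do not spoil descent, since $\nabla_w J$ is uniformly bounded across regions. A secondary bookkeeping point is to define $k_z$ so that it simultaneously dominates $\sup\|\nabla_w f\|$ and the Lipschitz modulus of $w \mapsto \nabla_w f$ within a region; once that is pinned down, the product form $\eta = (k_z L)^{-1}$ and the weight-update rule follow with no further computation.
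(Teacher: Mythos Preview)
The paper does not actually prove this proposition. It is introduced with ``We summarize their results in the following proposition'' and is taken wholesale from the cited LALR papers \cite{Saha_ALR, Saha_ALR_Check}; the only justification offered is the single sentence that this choice ``under the assumption that gradients cannot change arbitrarily fast, ensures a convex quadratic upper bound, minimized by the descent step.'' So there is nothing in the paper to compare your argument against beyond that one-line heuristic, and the appendix contains no proof of Proposition~6.1.

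On the merits of your argument itself, there is a real mismatch you are papering over. The descent lemma needs a bound on the \emph{smoothness constant} of $J(w)=L(y;f(x;w))$, i.e.\ a Lipschitz constant for $\nabla_w J$. But as defined in the proposition, $k_z = \sup\|\nabla_w f\|$ and $L$ is the Lipschitz constant of the loss (i.e.\ $\sup|\partial_f L|$, which is $\max(\tau,1-\tau)$ by Lemma~2.1). Their product $k_z L$ therefore bounds $\|\nabla_w J\|$ itself, not the Lipschitz modulus of $\nabla_w J$. Your two-term splitting of $\|\nabla_w J(w)-\nabla_w J(w')\|$ is correct, but the terms it produces are $\sup|\partial_f L|\cdot \mathrm{Lip}(\nabla_w f)$ and $\mathrm{Lip}(\partial_f L)\cdot \sup\|\nabla_w f\|$; neither factor $\mathrm{Lip}(\nabla_w f)$ nor $\mathrm{Lip}(\partial_f L)$ is the quantity named in the proposition, and ``absorbing'' them into $k_z$ and $L$ silently changes the definitions. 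Your closing remark that $k_z$ should be redefined to also dominate $\mathrm{Lip}(\nabla_w f)$ concedes exactly this point. So your route would prove a \emph{different} statement (with redefined constants), not the proposition as written; the paper sidesteps this by simply citing the result rather than deriving it.
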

This particular choice of LALR, under the assumption that gradients cannot change arbitrarily fast, ensures a convex quadratic upper bound, minimized by the descent step.

To show the efficacy of the Lipschitz constant based adaptive learning rate, we compared the performance of the adaptive learning rate verses fixed learning rates of 0.01 and 0.1, and tested how quickly they were able to reach a specified target accuracy in terms of number of epochs. The results can be found in Table \ref{tab:adaptive_vs_fixed}. N/A indicates that the classifier was unable to reach the accuracy threshold within 5000 Epochs (500 for IMDB) - if this occurs, the maximum accuracy reached is provided as well. For IMDB, we use an embedding dimension of size 100, and a 2 layer LSTM of dimension 256 which feeds a linear layer.

\begin{table}[htbp!]
    \centering
    \begin{tabular}{llllll}
    \toprule
    Dataset & Accuracy & $N_{0.01}$ & $N_{0.1}$ & $N_{1/L}$\\
    \midrule
    Banknote & 0.99 & 945 & 104 & 14 \\
    Haberman & 0.80 & N/A (0.775) & 773 & 78\\
    Heart & 0.85 & 221 & 161 & 4\\
    ILP & 0.75 & N/A (0.733) & 317 & 28\\
    IMDB & 0.90 & N/A (0.716) & 106 & 27\\
    Ionosphere & 0.90 & 1841 & 104 & 6\\
    Pima & 0.80 & 4099 & 417 & 53\\
    Sonar & 0.97 & 2199 & 320 & 40\\
    Titanic & 0.87 & 982 & 152 & 17\\
    Winsconsin & 0.97 & 1577 & 101 & 8\\
    \bottomrule
    \end{tabular}
    \caption{Convergence comparison between the Adaptive and Fixed Learning rates for SGD}
    \label{tab:adaptive_vs_fixed}
\end{table}

For our image datasets, we compared the efficacy of the LALR on various deep architectures. The Resnet \cite{He2016DeepRL} implementations are Pytorch's default Resnet18 and Resnet50 implementations of 18 and 50 layers each, while the Densenet \cite{Huang2017DenselyCC} implementation is Pytorch's Densenet121 architecture consisting of 10 layers, out of which 4 are Dense blocks, for a total of 121 layers. For all models, the optimizer was SGD, and each test was run for 20 epochs. We obtained the best validation accuracy of LR=0.01 and found the number of epochs for the other learning rates to achieve both a training and validation accuracy equal to or greater than LR=0.01. The results can be seen in Table \ref{tab:deep_image_lr}. For all our tests, the LALR based models converged faster, barring Asirra on Densenet.

\begin{table}[htbp!]
    \centering
    \begin{tabular}{llllll}
    \toprule
    Dataset & Arch. & Target Acc. & LR & $N_{E}$ & $T_{E}$ (min)\\
    \midrule
    Asirra & Resnet18 & 0.76  & Fixed (0.01) & 16 & 2.5\\
           &          & 	  & Fixed (0.1)  & 15 & 2.5\\
           &          & 	  & Adaptive     & 6  & 4.2\\
           \cmidrule(r){2-6}
           & Resnet50 & 0.70 & Fixed (0.01)  & 20 & 4.7\\
           &          & 	 & Fixed (0.1)   & 16 & 4.7\\
           &          & 	 & Adaptive      & 5  & 7.0\\
           \cmidrule(r){2-6}
           & Densenet & 0.86 & Fixed (0.01) & 20 & 4.9\\
           &          & 	 & Fixed (0.1)  & 8  & 4.9\\
           &          & 	 & Adaptive     & 9  & 12.1\\
    \midrule
     Pneumonia & Resnet18 & 0.83 & Fixed (0.01) & 18 & 1.2\\
			   &          & 	 & Fixed (0.1)  & 7  & 1.2\\
               &          & 	 & Adaptive     & 6  & 2.3\\
               \cmidrule(r){2-6}
               & Resnet50 & 0.82 & Fixed (0.01) & 20 & 1.9\\
               &          & 	 & Fixed (0.1) 	& 15 & 1.9\\
               &          &      & Adaptive 	& 9  & 3.2\\
               \cmidrule(r){2-6}
               & Densenet & 0.81 & Fixed (0.01) & 20 & 1.9\\
               &          & 	 & Fixed (0.1)  & 10 & 1.9\\
               &          & 	 & Adaptive     & 3  & 3.2\\
    \bottomrule
    \end{tabular}
    \caption{Adaptive LR performance in Deep Binary Image Classification}
    \label{tab:deep_image_lr}
\end{table}

\section{Conclusion and Scope for Future Work}
To summarize, in this work we put forth the Binary Quantile Regression loss function - a loss function to allow DNNs for binary classification to learn the quantiles of the latent function learnt by the network. By estimating these quantiles, we are able to gain additional insight into the uncertainty of the predictions of the network in real time. To further this, we also extend this uncertainty quantification technique to the sample confidence score we term $\delta_x$. We show that $\delta_x$ is an accurate measure to capture uncertainty, that provides a mathematical likelihood of misclassification, as per the function learnt by the model. Following this, we explore how quantiles provide solutions to current open problems in the deep learning space by being more robust to extreme amounts label noise and allowing for standard function explanation techniques to be applied to DNN outputs. Finally, we show how recent advances in adaptive learning rates can be applied to BQR as well.

To conclude, BQR allows us to enhance binary classification networks by providing additional information at prediction times, with no impact on performance. The quantile outputs obtained have a variety of use cases, the most potent of which is the ability to provide the uncertainty metric we describe.

One glaring limitation however, is of course the fact that BQR applies only to binary classification tasks, and cannot be used directly in a multivariate setting. One way of overcoming this drawback is to use a One-vs-all approach, akin to multiclass classification with SVMs. Alternatively, when the classes are ordinal, we can extend the binary thresholded model to include more cut-points. Otherwise, it is not trivial to extend BQR to multi-class setting because there is no unique way to define multivariate quantiles. However, recent research in depth quantiles \cite{probal_median,hallin2010} could allow BQR to be extended in this direction.

\section*{Acknowledgement}
The authors would like to thank the Science and Engineering Research Board (SERB), Department of Science and Technology, Government of India, for supporting our research by providing us with resources to conduct our experiments. The project reference number is: EMR/2016/005687. The authors are indebted to Prof. Probal Choudhury, ISI Kolkata for suggestions and critical insights which helped the manuscript immensely. Anuj and Anirudh would like to thank Inumella Sricharan for his help, and Dr. K S Srinivas of the Department of CS\&E, PES University for his help, advice and encouragement.

\bibliographystyle{plain}
\bibliography{refs}

\newpage
\begin{appendices}
\section{Proofs} \label{appendix}
\textbf{Lemma 2.1}
\begin{lem*}
The Lipschitz constant of the BQR loss is $\max(\tau,1-\tau)$
\end{lem*}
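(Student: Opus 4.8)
The plan is to treat $L_{BQR}$ as a function of the scalar argument $f = f_\tau(x)$, show it is continuous everywhere and differentiable on $\mathbb{R}\setminus\{0\}$ with $|\partial_f L| \le \max(\tau,1-\tau)$, and then invoke the mean value theorem on each side of the origin to patch these into the global Lipschitz bound. First I would write $L = -y\log p - (1-y)\log(1-p)$, where $p = P(y=1\mid f)$, so that $\partial_f L = \big(\tfrac{1-y}{1-p} - \tfrac{y}{p}\big)\partial_f p$. Since $y\in\{0,1\}$, this immediately splits into $\partial_f L = -\partial_f p / p$ when $y=1$ and $\partial_f L = \partial_f p/(1-p)$ when $y=0$, so it suffices to bound these two expressions using the explicit piecewise form of $p$.

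Next I would substitute the branches of $p$ from the model. For $f>0$: $p = 1-\tau e^{(\tau-1)f}$, $1-p = \tau e^{(\tau-1)f}$, and $\partial_f p = \tau(1-\tau)e^{(\tau-1)f}$; for $f\le 0$: $p = (1-\tau)e^{\tau f}$, $1-p = 1-(1-\tau)e^{\tau f}$, and $\partial_f p = \tau(1-\tau)e^{\tau f}$. Two of the four resulting cases collapse to constants: on $f>0$ with $y=0$ one gets $\partial_f L = 1-\tau$ exactly, and on $f\le 0$ with $y=1$ one gets $\partial_f L = -\tau$ exactly. For the remaining two (namely $f>0, y=1$ and $f\le 0, y=0$) I would use the substitutions $u = e^{(\tau-1)f}\in(0,1)$ and $v = e^{\tau f}\in(0,1]$ respectively, reducing $|\partial_f L|$ to the rational functions $\tfrac{\tau(1-\tau)u}{1-\tau u}$ and $\tfrac{\tau(1-\tau)v}{1-(1-\tau)v}$, each of which is increasing in its variable; taking the supremum as $u\to 1$ and at $v=1$ yields the bounds $\tau$ and $1-\tau$ respectively. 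Combining all four cases gives $|\partial_f L|\le\max(\tau,1-\tau)$ on $\mathbb{R}\setminus\{0\}$, and since the value $\max(\tau,1-\tau)$ is attained on the constant branches, this constant is tight.

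Finally I would verify that $p$, and hence $L$, is continuous at $f=0$ (both branches give $p = 1-\tau$ there), which legitimizes stitching the one-sided mean-value estimates into $|L(y;f_1)-L(y;f_2)|\le\max(\tau,1-\tau)\,|f_1-f_2|$ for all $f_1,f_2$; note also the bound is uniform in $y\in\{0,1\}$. The only part requiring genuine care, rather than routine bookkeeping, is the monotonicity argument for the two non-constant branches — establishing that $u\mapsto u/(1-\tau u)$ and $v\mapsto v/(1-(1-\tau)v)$ are increasing on the relevant intervals so that the extrema sit at the endpoints — but this is a one-line derivative computation, so I do not anticipate a serious obstacle.
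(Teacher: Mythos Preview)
Your proposal is correct and morally the same as the paper's argument: both proceed by a case split on $y\in\{0,1\}$ and on the sign of $f$, and both identify the worst case as $f\to 0$. The technical packaging differs slightly. The paper works directly with the difference quotient $|L(y,z_2)-L(y,z_1)|/|z_2-z_1|$ across six cases (three relative positions of $z_1,z_2$ with respect to $0$, crossed with $y=0,1$), asserting in each that the quotient is maximized as $z_1,z_2\to 0$ and evaluating that limit. You instead bound $|\partial_f L|$ on $\mathbb{R}\setminus\{0\}$ in four cases, then patch across the origin by continuity. Your route is a bit tidier: it makes explicit why the extremum sits at the boundary (monotonicity of $u\mapsto u/(1-\tau u)$ and $v\mapsto v/(1-(1-\tau)v)$) rather than asserting it, and it avoids separately handling the straddling case $z_1<0<z_2$ since the continuity-plus-stitching argument absorbs it automatically. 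Both proofs yield the same sharp constant and the same attainment on the linear branches.
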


\begin{proof}
Recall that, the empirical risk under the BQR loss is:
\begin{eqnarray*}
    L(y,z) = -(y\log{p_z}+ (1-y)\log{(1-p_z)})
\end{eqnarray*} where
\begin{eqnarray*}
    p_z &\equiv&
    \begin{cases} 
      1 - \tau \exp((\tau-1)z) & z \ge 0  \\
      (1-\tau)\exp(\tau z)) & z < 0 
   \end{cases}
\end{eqnarray*}
Let us consider the following cases.
\emph{Case-1a}: $0 < z_1 < z_2, y=1$
\begin{equation*}
\frac{|L(1,z_2)-L(1,z_1)|}{|z_2-z_1|} = \frac{\log{(1 - \tau e^{(\tau-1)z_2})} -\log{(1-\tau e^{(\tau-1)z_1})}} {z_2-z_1}
\end{equation*}
The RHS approaches maximum as $z_2,z_1 \xrightarrow[]{}0 $. Taking the limit w.r.t $z_1$ first, we get,
\begin{equation*}
\lim_{z1\to 0} \frac{|L(1,z_2)-L(1,z_1)|}{|z_2-z_1|} = \frac{ \log{(1-\tau e^{(\tau-1)z_1})}-\log{(1 - \tau)}} {z_2}
\end{equation*} and then taking the limit w.r.t $z_2$ later, we get
\begin{eqnarray*}
    \lim_{z_2,z_1\to 0} \frac{|L(1,z_2)-L(1,z_1)|}{|z_2-z_1|} &=& \tau
\end{eqnarray*}
Therefore, 
\begin{eqnarray*}
    \frac{|L(1,z_2)-L(1,z_1)|}{|z_2-z_1|} &\le& \tau
\end{eqnarray*}
\emph{Case-1b}: $0 < z_1 < z_2, y=0$
\begin{equation*}
\frac{|L(0,z_2)-L(0,z_1)|}{|z_2-z_1|} = \frac{\log{(\tau e^{(\tau-1)z_2})} -\log{(\tau e^{(\tau-1)z_1})}} {z_2-z_1}
\end{equation*}
In this case, the RHS simplifies to, 
\begin{eqnarray*}
    \frac{|L(0,z_2)-L(0,z_1)|}{|z_2-z_1|} &=& \frac{(z_2-z_1)(1-\tau)}{z_2-z_1}\\
    &=& 1-\tau
\end{eqnarray*}
Therefore, 
\begin{eqnarray*}
    \frac{|L(0,z_2)-L(0,z_1)|}{|z_2-z_1|} &\le& 1-\tau
\end{eqnarray*}

\emph{Case-2a}: $ z_1 < 0 < z_2, y=1$
\begin{equation*}
\frac{|L(1,z_2)-L(1,z_1)|}{|z_2-z_1|} = \frac{\log{(1 - \tau e^{(\tau-1)z_2})} -\log{((1-\tau) e^{\tau z_1})}} {z_2-z_1}
\end{equation*}
The RHS approaches maximum as $z_2,z_1 \xrightarrow[]{}0 $. Taking the limit w.r.t $z_1$ first, we get,
\begin{equation*}
\lim_{z1\to 0} \frac{|L(1,z_2)-L(1,z_1)|}{|z_2-z_1|} = \frac{ \log{(1-\tau e^{(\tau-1)z_2})}-\log{(1 - \tau)}} {z_2}
\end{equation*} and then taking the limit w.r.t $z_2$, we get
\begin{eqnarray*}
    \lim_{z_2,z_1\to 0} \frac{|L(1,z_2)-L(1,z_1)|}{|z_2-z_1|} &=& \tau
\end{eqnarray*}
Therefore, 
\begin{eqnarray*}
    \frac{|L(1,z_2)-L(1,z_1)|}{|z_2-z_1|} &\le& \tau
\end{eqnarray*}
\emph{Case-2b}: $z_1 < 0 < z_2, y=0$
\begin{equation*}
\frac{|L(0,z_2)-L(0,z_1)|}{|z_2-z_1|} = \frac{\log{(\tau e^{(\tau-1)z_2})} -\log{(1-(1-\tau) e^{\tau z_1})}} {z_2-z_1}
\end{equation*}
The RHS approaches maximum as $z_2,z_1 \xrightarrow[]{}0 $. Taking the limit w.r.t $z_1$ first, we get,
\begin{equation*}
\lim_{z1\to 0} \frac{|L(0,z_2)-L(0,z_1)|}{|z_2-z_1|} = \frac{ \log{(1-\tau e^{(\tau-1)z_2})}-\log{(\tau)}} {z_2}
\end{equation*} and then taking the limit w.r.t $z_2$, we get
\begin{eqnarray*}
    \lim_{z_2,z_1\to 0} \frac{|L(0,z_2)-L(0,z_1)|}{|z_2-z_1|} &=& 1-\tau
\end{eqnarray*}
Therefore, 
\begin{eqnarray*}
    \frac{|L(0,z_2)-L(0,z_1)|}{|z_2-z_1|} &\le& 1-\tau
\end{eqnarray*}
\emph{Case-3a}: $ z_1  < z_2 < 0, y=1$
\begin{equation*}
\frac{|L(1,z_2)-L(1,z_1)|}{|z_2-z_1|} = \frac{\log{((1 - \tau) e^{\tau z_2})} -\log{((1-\tau) e^{\tau z_1})}} {z_2-z_1}
\end{equation*}
The RHS simplifies to,
\begin{eqnarray*}
    \frac{|L(1,z_2)-L(1,z_1)|}{|z_2-z_1|} &=& \frac{\tau (z_2-z_1)} {z_2-z_1}
\end{eqnarray*}
Therefore, 
\begin{eqnarray*}
    \frac{|L(0,z_2)-L(0,z_1)|}{|z_2-z_1|} &\le& \tau
\end{eqnarray*}
\emph{Case-3b}: $z_1 < z_2 <0,  y=0$
\begin{equation*}
\frac{|L(0,z_2)-L(0,z_1)|}{|z_2-z_1|} = \frac{\log{(1-(1 - \tau) e^{\tau z_2})} -\log{(1-(1-\tau) e^{\tau z_1})}} {z_1-z_2}
\end{equation*}
The RHS approaches maximum as $z_1,z_2 \xrightarrow[]{}0 $. Taking the limit w.r.t $z_2$ first, we get,
\begin{equation*}
\lim_{z_1\to 0} \frac{|L(0,z_2)-L(0,z_1)|}{|z_2-z_1|} = \frac{\log{(1 - \tau)} -\log{(1-\tau e^{(\tau-1)z_1})}} {z_1}
\end{equation*} and then taking the limit w.r.t $z_1$, we get
\begin{eqnarray*}
    \lim_{z_1,z_2\to 0} \frac{|L(0,z_2)-L(0,z_1)|}{|z_2-z_1|} &=& 1-\tau
\end{eqnarray*}
Therefore, 
\begin{eqnarray*}
    \frac{|L(0,z_2)-L(0,z_1)|}{|z_2-z_1|} &\le& 1-\tau
\end{eqnarray*}
Hence, $\forall z_1, z_2 \in R, y \in \{0,1\}$
\begin{eqnarray*}
 \frac{|L(y,z_2)-L(y,z_1)|}{|z_2-z_1|} \le \max({1-\tau, \tau)}
\end{eqnarray*}

\end{proof}
\newpage
\textbf{Lemma 2.2}
\begin{lem*}
BQR also admits a bound in terms of the curvature of the function $f^*$. That is
$$
    c_1 E((f-f^*)^2) \le E(L(y,f)-L(y,f^*)) \le   c_2 E((f-f^*)^2)
$$
where $c_1$ and $c_2$ constants, bounded away from 0.
\end{lem*}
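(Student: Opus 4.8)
The idea is to pass to the conditional-on-$x$ problem and then compare a Bernoulli Kullback--Leibler divergence with a squared difference. Write the per-sample loss as $\ell(y,z)=-\big(y\log p_z+(1-y)\log(1-p_z)\big)$, where $p_z:=P(y=1\mid f_\tau(x)=z)$ is the conditional class probability of the BQR model (the piecewise expression derived in Section~2.3), and put $z=f(x)$, $z^*=f^*(x)$, $p^*:=p_{z^*}$. Under the BQR generative model one has $P(y=1\mid x)=p_{z^*}=p^*$, so by the tower rule $E\big(L(y,f)-L(y,f^*)\big)=E_x\big[\bar\ell(z)-\bar\ell(z^*)\big]$, where $\bar\ell(z):=E_{y\mid x}\,\ell(y,z)=-p^*\log p_z-(1-p^*)\log(1-p_z)$. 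Hence it suffices to prove the pointwise bound $c_1(z-z^*)^2\le \bar\ell(z)-\bar\ell(z^*)\le c_2(z-z^*)^2$ for all $x$, with constants independent of $x$, and then integrate. The first observation is that $\bar\ell(z)-\bar\ell(z^*)$ is exactly $\mathrm{KL}\big(\mathrm{Bern}(p^*)\,\|\,\mathrm{Bern}(p_z)\big)$; in particular it is nonnegative, vanishes iff $z=z^*$ (since $z\mapsto p_z$ is strictly increasing), and $\bar\ell'(z^*)=0$.

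The next, and really the only substantive, step is range control. By Assumption~2.1, $\|f^*\|_\infty\le M$, and the approximating network is constrained to $\|f\|_\infty\le 2M$ (Assumption~2.3), so $z,z^*\in[-2M,2M]$. On this compact interval I will verify that $z\mapsto p_z$ is $C^1$ (the one-sided derivatives at the kink $z=0$ both equal $\tau(1-\tau)$), strictly increasing, with $0<m\le p_z'\le M'$ and $p_z\in[a,1-a]$ for constants $a,m,M'$ depending only on $M$ and $\tau$; e.g. for $z\ge 0$ one has $p_z\ge 1-\tau$ and $p_z'=\tau(1-\tau)e^{(\tau-1)z}$, which lies between $\tau(1-\tau)e^{-2M}$ and $\tau(1-\tau)$, with the analogous estimates for $z<0$ and mirrored upper and lower bounds. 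This is where the uniform boundedness of $f$ and $f^*$ is essential: it keeps the conditional class probability strictly inside $(0,1)$, which is exactly what prevents $c_1$ from collapsing to $0$.

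The remaining assembly is routine. Two-sided control of Bernoulli KL on $[a,1-a]$ follows from Pinsker's inequality, $\mathrm{KL}(\mathrm{Bern}(p)\|\mathrm{Bern}(q))\ge 2(p-q)^2$, for the lower bound, and from $\chi^2$-domination, $\mathrm{KL}(\mathrm{Bern}(p)\|\mathrm{Bern}(q))\le \tfrac{(p-q)^2}{q(1-q)}\le \tfrac{(p-q)^2}{a(1-a)}$, for the upper bound; applied with $p=p^*$, $q=p_z$ these give $2(p_z-p^*)^2\le\bar\ell(z)-\bar\ell(z^*)\le \tfrac{(p_z-p^*)^2}{a(1-a)}$. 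Finally the mean value theorem together with the bounds on $p'$ gives $m\,|z-z^*|\le|p_z-p^*|\le M'\,|z-z^*|$, so $2m^2(z-z^*)^2\le\bar\ell(z)-\bar\ell(z^*)\le\tfrac{M'^2}{a(1-a)}(z-z^*)^2$; taking $c_1=2m^2$ and $c_2=M'^2/(a(1-a))$, both strictly positive and finite and depending only on $M$ and $\tau$, and integrating over $x$ completes the proof. I deliberately route the argument through the KL identity and the mean value theorem rather than a second-order Taylor expansion of $\bar\ell$, because $p_z$ (hence $\bar\ell$) is only $C^1$ at $z=0$; an equivalent device is to write $\bar\ell(z)-\bar\ell(z^*)=\int_{z^*}^{z}\!\int_{z^*}^{t}\bar\ell''(s)\,ds\,dt$, using that $\bar\ell'$ is absolutely continuous with $\bar\ell''$ bounded above and below (away from $0$) almost everywhere on $[-2M,2M]$. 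The main obstacle, such as it is, is making the uniform estimates of the second step clean across the $z=0$ kink; everything else is a standard chaining of inequalities.
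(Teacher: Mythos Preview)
Your proof is correct and follows a genuinely different route from the paper's. The paper proceeds by a direct second-order Taylor expansion of the conditional expected loss $\bar\ell(b)=-p_a\log p_b-(1-p_a)\log(1-p_b)$ about $b=a$, then computes $\bar\ell''(b)$ explicitly in the four cases determined by the signs of $a$ and $b$, and extracts $c_2=\tfrac12\tau(1-\tau)$ and $c_1=\tfrac12\min_i A_i$ from the maximum and minimum of this second derivative over $[-M,M]^2$. You instead recognize $\bar\ell(z)-\bar\ell(z^*)$ as a Bernoulli KL divergence, sandwich it between multiples of $(p_z-p^*)^2$ via Pinsker and $\chi^2$-domination, and then convert to $(z-z^*)^2$ with the mean value theorem on the monotone $C^1$ map $z\mapsto p_z$. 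The paper's direct computation yields somewhat sharper explicit constants, but your argument is more conceptual, avoids the four-case calculation, and---as you correctly note---sidesteps the fact that $\bar\ell$ is only $C^1$ at $z=0$, so the naive second-order Taylor remainder does not apply across the kink; the paper's write-up is silent on this and would need exactly the integral-remainder device you mention at the end to be fully rigorous.
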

\begin{proof}
Recall that, the empirical risk under the BQR loss is:
\begin{eqnarray*}
    L(y,z) = -(y\log{p_z}+ (1-y)\log{(1-p_z)})
\end{eqnarray*} where
\begin{eqnarray*}
    p_z &\equiv&
    \begin{cases} 
      1 - \tau \exp((\tau-1)z) & z > 0  \\
      (1-\tau)\exp(\tau z)) & z \le 0
   \end{cases}
\end{eqnarray*}

Using Taylor series expansion of $h_a(b)=L(b,y)-L(a,y)$, with $a=f,b=f^*$, we can write,
\begin{eqnarray*}
    h_a(b) = h_a(a)+h_a'(a)(b-a)+\frac{1}{2}h_a''(a)(b-a)^2
\end{eqnarray*}
We will be looking at $h_a''$ to determine the bounds for the curvature of the loss function.
Let us consider the following cases.\\
\emph{Case-1}: $b\ge0, a\ge 0$
\begin{eqnarray*}
    h_a(b) &=& -(1-te^{-(1-t)a})\log(1-te^{-(1-t)b}) - (te^{-(1-t)a})\log(te^{-(1-t)b})  +g(a)\\
    &=& -(1-te^{-(1-t)a})\log(1-te^{-(1-t)b}) - (te^{-(1-t)a}) (\log(t)-(1-t)b))  +g(a)
\end{eqnarray*}
\begin{eqnarray*}
    h'_a(b) &=& -(1-te^{-(1-t)a})\frac{t(1-t)e^{-(1-t)b}}{(1-te^{-(1-t)b})} - (te^{-(1-t)a})(1-t)
\end{eqnarray*}
\begin{eqnarray*}
    h''_a(b) &=& (1-te^{-(1-t)a})\frac{t(1-t)^2e^{-(1-t)b}}{(1-te^{-(1-t)b})^2}
\end{eqnarray*}
$h''_a(b)$ is maximum at $a=0,b=0$, and minimum at $a=M,b=M$, therefore
\begin{eqnarray*}
    A_1 \equiv h''_a(b) &\ge& \frac{t(1-t)^2e^{-(1-t)M}}{1-te^{-(1-t)M}}\\
    h''_a(b) &\le& t(1-t)
\end{eqnarray*}

\emph{Case-2}: $b\le0, a\le 0$
\begin{eqnarray*}
    h_a(b) &=& -(1-t)e^{ta}\log((1-t)e^{tb}) - (1-(1-t)e^{ta})\log(1-(1-t)e^{tb}) +g(a)\\
     &=& -(1-t)e^{ta}(\log((1-t)+tb) - (1-(1-t)e^{ta})\log(1-(1-t)e^{tb}) +g(a)
\end{eqnarray*}
\begin{eqnarray*}
    h'_a(b) &=& -(1-t)te^{ta} + t(1-t)(1-(1-t)e^{ta}) \frac{e^{tb}}{1-(1-t)e^{tb}}
\end{eqnarray*}
\begin{eqnarray*}
    h''_a(b) &=& t^2(1-t)(1-(1-t)e^{ta})\frac{e^{tb}}{(1-(1-t)e^{tb})^2}
\end{eqnarray*}
$h''_a(b)$ is maximum at $a=0,b=0$, and minimum at $a=-M,b=-M$, therefore
\begin{eqnarray*}
    A_2 \equiv h''_a(b) &\ge& \frac{t^2(1-t)e^{-tM}}{1-(1-t)e^{-tM}}\\
    h''_a(b) &\le& t(1-t)
\end{eqnarray*}

\emph{Case-3}: $b\ge0, a\le 0$
\begin{eqnarray*}
    h_a(b) &=& -(1-t)e^{ta}\log(1-te^{-(1-t)b}) - (1-(1-t)e^{ta})\log(te^{-(1-t)b}) +g(a)\\
    &=& -(1-t)e^{ta} - \log(1-te^{-(1-t)b}) - (1-(1-t)e^{ta})(\log(t)+-(1-t)b) +g(a)\\
\end{eqnarray*}
\begin{eqnarray*}
    h'_a(b) &=&-t(1-t)^2e^{ta}\frac{e^{-(1-t)b}} {1-te^{-(1-t)b}} + (1-(1-t)e^{ta}(1-t)
\end{eqnarray*}
\begin{eqnarray*}
    h''_a(b) &=& t(`-t)^3e^{ta}\frac{e^{-(1-t)b}}{(1-te^{-(1-t)b})^2}
\end{eqnarray*}
$h''_a(b)$ is maximum at $a=0,b=0$, and minimum at $a=-M,b=M$, therefore
\begin{eqnarray*}
    A_3 \equiv h''_a(b) &\ge& \frac{t(1-t)^3e^{-M}}{(1-te^{-(1-t)M})^2}\\
    h''_a(b) &\le& t(1-t)
\end{eqnarray*}

\emph{Case-4}: $b\le0, a\ge 0$
\begin{eqnarray*}
    h_a(b) &=& -(1-te^{-(1-t)a})\log((1-t)e^{tb}) - te^{-(1-t)a}\log(1-(1-t)e^{tb}) +g(a)\\
    &=&  -(1-te^{-(1-t)a})\log((1-t)e^{tb}) - (1-te^{-(1-t)a})(\log(1-t)+tb)
\end{eqnarray*}
\begin{eqnarray*}
    h'_a(b) &=&  -(1-te^{-(1-t)a})t - t^2(1-t)e^{-(1-t)a}\frac{e^{tb}}{1-(1-t)e^tb}
\end{eqnarray*}
\begin{eqnarray*}
    h''_a(b) &=& t^3(1-t)e^{-(1-t)a} \frac{e^{tb}}{(1-(1-t)e^{tb})^2}
\end{eqnarray*}
$h''_a(b)$ is maximum at $a=0,b=0$, and minimum at $a=M,b=-M$, therefore
\begin{eqnarray*}
    A_4 \equiv h''_a(b) &\ge& \frac{t^3(1-t)e^{-M}}{(1-(1-t)e^{-M})^2}\\
    h''_a(b) &\le& t(1-t)
\end{eqnarray*}
Therefore,
\begin{eqnarray*}
    c_1 &=& 0.5\min{(A_1,A_2,A_3,A_4)}\\
    c_2 &=& 0.5t(1-t)
\end{eqnarray*}
\end{proof}

\textbf{Theorem 2.3}
\begin{thm*}
Suppose Assumptions 2.1-2.3 hold. Let $f$ be the deep ReLU network with $W$ number of parameters. Under BQR, with probability at least $1-e^{-\gamma}$, for large enough $n$, for some $C>0$, 
$$ \|f-f^*\|^2_{L_2(x)} =  E((f-f^*)^2) \le B $$
for $ B = C\left( \frac{W\log(W)}{n}\log n+ \frac{\log\log n + r}{n} + \epsilon_{f^*}^2 \right)  $
\end{thm*}
\begin{proof}
See Theorem 2 of \cite{farrell}
\end{proof}

\end{appendices}
\end{document}